\newcommand{\hX}{\hat{X}}
\newcommand{\hx}{\hat{x}}
\newcommand{\prob}{\mathrm{Pr}}
\newcommand{\norm}{\mathrm{Norm}}
\newcommand{\accept}{\mathtt{acc}}
\newcommand{\reject}{\mathtt{rej}}
\newcommand{\btau}{\boldsymbol{\tau}}
\newcommand{\CONTINUE}{\STATE \textbf{continue}}
\newcommand{\BREAK}{\STATE \textbf{break}}
\patchcmd{\algorithmic}{\addtolength{\ALC@tlm}{\leftmargin} }{\addtolength{\ALC@tlm}{\leftmargin}}{}{}
\theoremstyle{plain}
\newenvironment{manualtheorem}[1]{%
  \manualtheoreminner
}{\endmanualtheoreminner}
\newtheorem{theorem}{Theorem}[section]
\theoremstyle{definition}
\theoremstyle{remark}
\newcommand{\draftlen}{L_{\mathrm{draft}}}
\newcommand{\branchingfactor}{\mathbf{b}}
\icmltitlerunning{Recursive Speculative Decoding: Accelerating LLM Inference via Sampling Without Replacement}
\begin{document}

\twocolumn[
\icmltitle{Recursive Speculative Decoding: \\Accelerating LLM Inference via Sampling Without Replacement}



\icmlsetsymbol{equal}{*}

\begin{icmlauthorlist}
\icmlauthor{Wonseok Jeon}{qualcomm}
\icmlauthor{Mukul Gagrani}{qualcomm}
\icmlauthor{Raghavv Goel}{qualcomm}
\icmlauthor{Junyoung Park}{qualcomm}
\icmlauthor{Mingu Lee}{equal,qualcomm}
\icmlauthor{Christopher Lott}{equal,qualcomm}
\end{icmlauthorlist}

\icmlaffiliation{qualcomm}{Qualcomm AI Research}

\icmlcorrespondingauthor{Wonseok Jeon}{wjeon@qti.qualcomm.com}
\icmlcorrespondingauthor{Mingu Lee}{mingul@qti.qualcomm.com}
\icmlcorrespondingauthor{Christopher Lott}{clott@qti.qualcomm.com}

\icmlkeywords{Machine Learning, ICML}

\vskip 0.3in
]



\printAffiliationsAndNotice{\textsuperscript{*}Equal advising}  

\begin{abstract}
Speculative decoding is an inference-acceleration method for large language models (LLMs) where a small language model generates a draft-token sequence which is further verified by the target LLM in parallel. Recent works have advanced this method by establishing a draft-token tree, achieving superior performance over a single-sequence speculative decoding. However, those works independently generate tokens at each level of the tree, not leveraging the tree's entire diversifiability. Besides, their empirical superiority has been shown for fixed length of sequences, implicitly granting more computational resource to LLM for the tree-based methods. None of the existing works has conducted empirical studies with fixed target computational budgets despite its importance to resource-bounded devices. We present Recursive Speculative Decoding (RSD), a novel tree-based method that samples draft tokens \emph{without} replacement and maximizes the diversity of the tree. During RSD's drafting, the tree is built by either \emph{Gumbel-Top-$k$ trick} that draws tokens without replacement in parallel or \emph{Stochastic Beam Search} that samples \emph{sequences} without replacement while early-truncating unlikely draft sequences and reducing the computational cost of LLM. We empirically evaluate RSD with Llama 2 and OPT models, showing that RSD outperforms the baseline methods, consistently for fixed draft sequence length and in most cases for fixed computational budgets at LLM. 
\end{abstract}

\section{Introduction}

Large language models (LLMs)~\cite{touvron2023llama,zhang2022opt,brown2020language,achiam2023gpt,jiang2023mistral} have gained popularity due to their outstanding achievements with high-quality text generation, 
which has drastically increased demands for faster text generation.
However, auto-regressive nature of LLMs limits text generation to produce a single token at a time and often suffers from memory-bandwidth bottleneck, which leads to slower inference~\cite{shazeer2019fast}. translation~\cite{xiao2023survey}.

Speculative decoding~\cite{chen2023accelerating,leviathan2023fast} has emerged as a solution for LLM inference acceleration by leveraging the innate parallelizability of the transformer network~\cite{vaswani2017attention}. 
This decoding method utilizes a draft model, i.e., a smaller language model, to auto-regressively generate a sequence of draft tokens with a significantly lower cost and latency, followed by the target LLM producing the token-wise probability distributions in parallel. 
Rejection sampling then verifies those draft tokens, recovering the 
sequence distribution by auto-regressive decoding with the target model.
As speculative decoding uses a single sequence of draft tokens, 
one needs to increase the draft-sequence length to better exploit LLM's parallelizability.
However, the longer draft sequence may slow down the overall inference in practice due to the computational overhead caused by additional auto-regressive decoding steps from the draft model, possibly decelerating the target model process due to the increased number of draft tokens.

Recent works on tree-based speculative decoding~\cite{sun2023spectr,miao2023specinfer} have achieved better diversity and higher acceptance rate via multiple draft-token sequences. Despite promising results, their decoding methods independently sample the draft tokens, often harming the diversity of the tree when samples overlap. 
Also, their experiments have been conducted for the fixed length of draft-token sequences across decoding methods, implicitly requiring more computational resource to the target model when using tree-based methods.
To the best of our knowledge, no prior work has thoroughly investigated the performance of single-sequence and tree-based speculative decoding methods with fixed target computational budget, which has practical importance for resource-bounded devices.

We propose \textbf{R}ecursive \textbf{S}peculative \textbf{D}ecoding (RSD), a novel tree-based speculative decoding algorithm that fully exploits the diversity of the draft-token tree by using sampling without replacement.
We summarize our contributions as below:

\textbf{Theoretical contribution.} We propose \emph{recursive rejection sampling} capable of recovering the target model's distribution with the sampling-without-replacement distribution defined by the draft model. 
\newline
\textbf{Algorithmic contribution.} We present RSD which builds draft-token tree composed of the tokens \emph{sampled without replacement}. 
Two tree construction methods, \textbf{RSD} with \textbf{C}onstant branching factors (RSD-C) and \textbf{RSD} with \textbf{S}tochastic Beam Search (RSD-S)~\cite{kool2019stochastic}, are proposed. 
\newline
\textbf{Empirical contribution.} Two perspectives are considered in our experiments: \textbf{(\textit{Exp1})} \emph{performance for fixed length of draft sequence}, which is also widely considered in previous works~\cite{sun2023spectr,miao2023specinfer}, and \textbf{(\textit{Exp2})} \emph{performance for fixed target computational budget}, where we compared methods with given size of the draft-token tree.
RSD is shown to outperform the baselines consistently in \textbf{(\textit{Exp1})}  and for the majority of experiments in \textbf{(\textit{Exp2})}.

\section{Background}

Let us consider a sequence generation problem with a set $\mathcal{X}$ of tokens. We also assume that there is a target model characterized by its conditional probability $q(x_{i+1}|x_{1:i}):=\prob\{{X_{i+1}}=x_{i+1}|X_{1:i}=x_{1:i}\}$, $i\in\mathbb{N}$ for $x_{1:i}:=(x_1,...,x_i)$, where $X_{1}, ..., X_{i+1}\in\mathcal{X}$ and $x_1, ..., x_{i+1}\in\mathcal{X}$ are random tokens and their realizations, respectively. 
Given an input sequence $X_{1:t}=x_{1:t}$, we can auto-regressively and randomly sample an output sequence $X_{t+1:t+i}$ for $i\in \mathbb{N}$, i.e., 
$
X_{t+i+1}\sim q(\cdot|X_{1:t+i}).$

\textbf{Speculative decoding.} Auto-regressive sampling with modern neural network accelerators (e.g., GPU/TPU) is known to suffer from the memory-bandwidth bottleneck~\cite{shazeer2019fast}, which prevents us from utilizing the entire computing power of those accelerators. 
Speculative decoding~\cite{leviathan2023fast,chen2023accelerating} addresses such issue by using the target model's parallelizability. It introduces a (small) draft model which outputs $p(\hX_{i+1}|\hX_{1:i}):=\prob\{\hX_{i+1}=\hx_{i+1}|\hX_{1:i}=\hx_{1:i}\}, i\in\mathbb{N}$.
Speculative decoding accelerates the inference speed by iteratively conducting the following steps:
\newline\textit{1) Draft token generation:} For an input sequence $X_{1:m}=x_{1:m}$ and the draft sequence length $L$, sample draft tokens $\hX_{n+1}\sim p(\cdot|X_{1:m},\hX_{m+1:n})$ auto-regressively for $n=m,...,m+L-1$ (where $\hX_{m+1:m}=\emptyset$).
\newline\textit{2) Evaluation with target model:} Use the target model to compute $q(\cdot|X_{1:m},\hX_{m+1:n}), n=m, ..., m+L$ in parallel.
\newline\textit{3) Verification via rejection sampling:} Starting from $n=m+1$ to $m+L$, sequentially accept the draft token $\hX_{n}$ (i.e., $X_n=\hX_n$) with the probability $\min\{1, \frac{q(\hX_{n}|X_{1:n-1})}{p(\hX_{n}|X_{1:n-1})}\}$. If one of the draft tokens $\hX_{n}$ is rejected, we sample $X_{n}\sim q_{\mathrm{res}}(\cdot|X_{1:n-1})$, where the residual distribution is defined by
\begin{align*}
 q_{\mathrm{res}}(\cdot|\tau)&:=\norm[[q(\cdot|\tau) - p(\cdot|\tau)]^+],
\end{align*}
for $[f]^+:=\max\{0, f(\cdot)\}$ and $\norm[f]:=\frac{f}{\sum_{x'\in\mathcal{X}}f(x')}$.
If all draft tokens are accepted ($X_n=\hX_n$ for $n=m+1,...,m+L$), sample an extra token $X_{m+L+1}\sim q(\cdot|X_{1:m+L})$.

\citet{chen2023accelerating} and \citet{leviathan2023fast} have shown that the target distribution can be recovered when rejection sampling is applied.

\textbf{Tree-based speculative decoding.} One can further improve the sequence generation speed by using multiple draft-token sequences, or equivalently, a tree of draft tokens.

\emph{SpecTr}~\cite{sun2023spectr} is a tree-based speculative decoding algorithm motivated by the Optimal Transport (OT)~\cite{villani2009optimal}. It generalizes speculative decoding with $K$ i.i.d. draft tokens $\hX^{(k)}\sim p, k=1, ..., K,$ while recovering the target distribution $q$. 
To this end, a $K$-sequential draft selection algorithm ($K$-SEQ) was proposed, where the algorithm decides whether to accept $K$ draft tokens $\hX^{(k)}, k=1, ..., K,$ or not with the probability
$
\min\{1, \frac{q(\hX^{(k)})}{\gamma\cdot p(\hX^{(k)})}\}, \gamma\in[1, K]$.
If all draft tokens are rejected, we use a token drawn from the residual distribution
\begin{align*}
\norm\left[q-\min\left\{p,\frac{q}{\gamma}\right\} \frac{1-(1-\beta_{p,q}(\gamma))^K}{\beta_{p,q}(\gamma)}\right]
\end{align*}
for $\beta_{p,q}(\gamma):=\sum_{x\in\mathcal{X}}\min\{p(x), q(x)/\gamma\}$.

\emph{SpecInfer} also used the draft-token tree to speed up the inference \emph{with multiple draft models} $p^{(k)}, k=1, ..., K$~\cite{miao2023specinfer}. 
During the inference of SpecInfer, all draft models generate their own draft tokwns independently and create a draft-token tree collectively through repetetion.
For draft verification, \emph{multi-round rejection sampling} is used to recover the target distribution, where we determine whether to accept one of the draft tokens or not with probability $\min\{1, \frac{q^{(k)}(\hX^{(k)})}{p^{(k)}(\hX^{(k)})}\}$ 
with distributions $q^{(1)}:=q$ and 
$q^{(k)}:=\norm\left[[q^{(k-1)}-p^{(k-1)}]^+\right], k=2, ..., K+1.$
If all draft tokens are rejected, we sample a token $Y\sim q^{(K+1)}$ from the last residual distribution.

\section{Recursive Speculative Decoding}

In this section, we present \textbf{R}ecursive \textbf{S}peculative \textbf{D}ecoding (RSD), a tree-based speculative decoding method that constructs draft-token trees via sampling without replacement. 
We first propose recursive rejection sampling, a generalization of multi-round speculative decoding~\cite{miao2023specinfer} that is applicable to draft distributions with dependencies, where sampling-without-replacement distribution is one instance of such distributions.
Then, we use recursive rejection sampling to validate each level of the draft-token tree which can be efficiently constructed via either Gumbel-Top-$k$ trick~\cite{vieira2014gumbel} and Stochastic Beam Search~\cite{kool2019stochastic},

\begin{algorithm}[t]
   \caption{Recursive Rejection Sampling}
   \label{alg:rrs_abs}
\begin{algorithmic}[1]
   \STATE {\bfseries Input:}
   Draft dist. $p^{(k)}, k=1, ..., K,$ target dist. $q$. 
   \STATE 
   Sample $\hX^{(k)}$ by \eqref{eq:draft_samples}.
   \STATE 
   Compute $q^{(k)}(\cdot|\hX^{(1:k-2)})$ and $\Theta^{(k)}$ by \eqref{eq:q_res} and \eqref{eq:threshold}.
   \FOR{$k$ \textbf{in} $\{1, ..., K\}$}
   \STATE Sample $A^{(k)}\in\{\accept,\reject\}$ with probability $\Theta^{(k)}$.
   \STATE 
   \textbf{if} $A^{(k)} = \accept$ \textbf{then}
   \textbf{return} $Z\leftarrow\hX^{(k)}$; \textbf{end if}
   \ENDFOR
   \STATE \textbf{return} $Z\sim q^{(K+1)}(\cdot|\hX^{(1:K-1)})$

\end{algorithmic}
\end{algorithm}

\subsection{Recursive Rejection Sampling: Generalized Multi-Round Rejection Sampling}\label{sec:recursive_rejection_sampling}

Suppose we have target distribution $q(x), x\in\mathcal{X}$. In recursive rejection sampling, we introduce random variables $\hX^{(1)}, ..., \hX^{(K)}\in\mathcal{X}$ that represent $K$ draft tokens; these tokens will locate at the same level of the draft-token tree in Section \ref{sec:tree-based-sd-with-rrs}. 
We aim to recover target distribution $q$, where 
\begin{align}
    \hX^{(1)}\sim p^{(1)}, \hX^{(k)}\sim p^{(k)}(\cdot|\hX^{(1:k-1)}), k=2, ..., K,
    \label{eq:draft_samples}
\end{align}
for some distributions $p^{(k)}, k=1,..., K$ and a sequence $\hX^{(1:k-1)}:=(\hX^{(1)}, ..., \hX^{(k-1)})$. Note that we assume distributions with dependencies unlike prior works such as SpecTr~\cite{sun2023spectr} consider independent distributions.
By using $p^{(1)}, ..., p^{(K)}$ and $q$, we define $q^{(1)}:=q$ and residual distributions
\begin{align}
    &q^{(k+1)}(\cdot|x^{(1:k-1)})\nonumber\\
    &:=\norm\left[[q^{(k)}(\cdot|x^{(1:k-2)})-p^{(k)}(\cdot|x^{(1:k-1)})]^+\right]
    \label{eq:q_res}
\end{align}
for $k=1, ..., K$ and $x^{(1)}, ..., x^{(K+1)}\in\mathcal{X}$, where $x^{(1:k')}=\emptyset$ (empty sequence, i.e., no conditioning) if $k'<1$, or $(x^{(1)}, ..., x^{(k')})$, otherwise.
Together with draft, target, and residual distributions, recursive rejection sampling introduces threshold random variables $\Theta^{(1)}, ..., \Theta^{(K)}\in[0, 1]$ which determines rejection criteria for each draft token $\hX^{(k)}, k=1, ..., K$:
\begin{align}
    \Theta^{(k)}
    &:=\min\left\{1, \frac{q^{(k)}(\hX^{(k)}|\hX^{(1:k-2)})}{p^{(k)}(\hX^{(k)}|\hX^{(1:k-1)})}\right\}.
    \label{eq:threshold}
\end{align}
Specifically, each $\Theta^{(k)}$ can be used to define random variables $A^{(k)}\in\{\accept, \reject\}$ (where $\accept$ and $\reject$ indicate acceptance and rejection of draft tokens, respectively) such that $\prob\left\{A^{(k)}=\accept|\Theta^{(k)}=\theta\right\}=\theta$ for $\theta\in[0, 1]$.

\begin{figure}[t]
\begin{center}
\centerline{\includegraphics[width=\columnwidth]{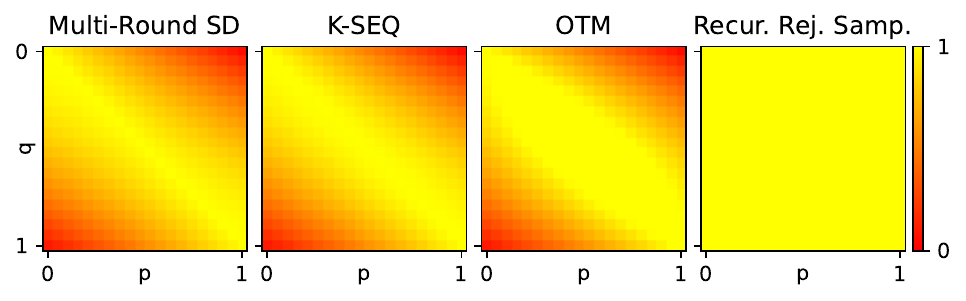}}
\vspace{-0.15in}
\caption{
Acceptance rates for multi-round speculative decoding, K-SEQ, OTM and recursive rejection sampling are given when $\mathrm{Ber}(p)$ and $\mathrm{Ber}(q)$ are draft and target distributions, respectively, and two tokens are proposed by the draft model ($K=2$). 
}
\label{fig:toy}
\end{center}
\vspace{-0.3in}
\end{figure}

\begin{figure*}[t]
\begin{center}
\centerline{\includegraphics[width=\textwidth]{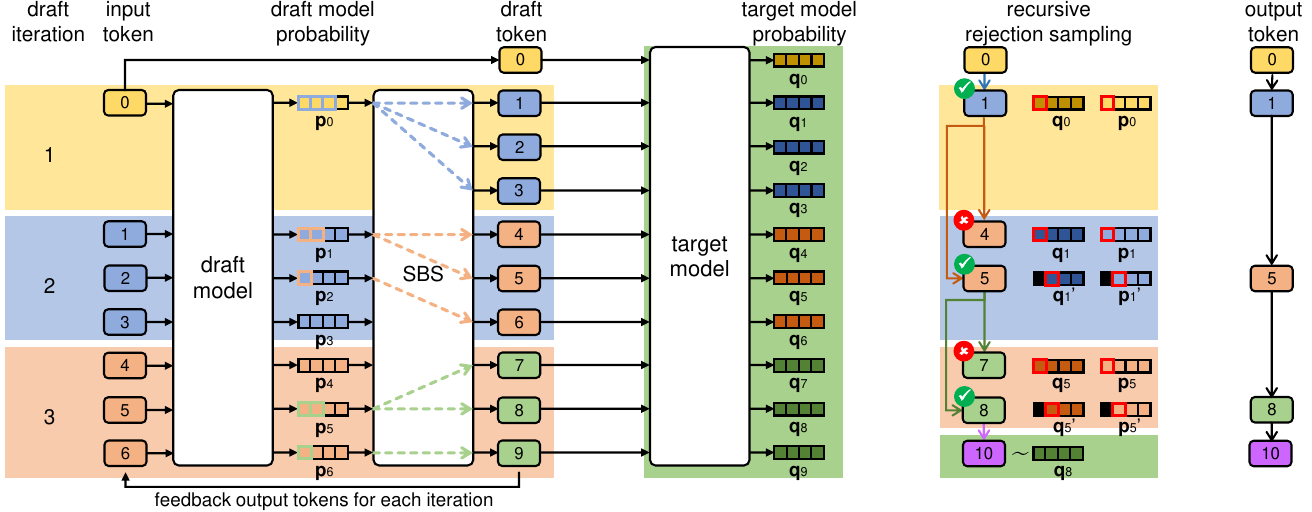}}
\caption{We describe the entire process of RSD with Stochastic Beam Search (RSD-S); the difference between RSD-S and RSD with Constant branching factors (RSD-C) lies at the method of constructing the draft-token tree. 
Draft tokens the tree are sampled in parallel at each level and auto-regressively across levels, while Stochastic Beam Search samples sequences without replacement at each tree level. The established draft-token tree is then processed by the target model in parallel, which lets us acquire the token-wise target model probabilities. Finally, recursive rejection sampling (for sampling-without-replacement distribution) is applied to each level of the tree, recovering the sequence generation distribution of the target model.}
\label{fig:rsd_big_picture}
\end{center}
\vspace{-0.25in}
\end{figure*}
Finally, recursive rejection sampling can be characterized by defining a random variable $Z\in\mathcal{X}$ such that
\begin{align}
Z:=
\begin{cases}
    \hX^{(k)},           &\text{if~}A^{(1:k-1)}=\reject^{k-1}, A^{(k)}=\accept,\\
                        &{\color{white}\text{if~}}k=1, ..., K,\\
    Y,  &\text{if~}A^{(1:K)}=\reject^{K},\\
                        &{\color{white}\text{if~}}Y\sim q^{(K+1)}(\cdot|\hX^{(1:K-1)}),
\end{cases}
\label{eq:recursive_rejection_sampling}
\end{align}
where $A^{(1:k-1)}:=(A^{(1)}, ..., A^{(k-1)})$ and $\reject^{k}$ is a length-$k$ sequence with all of its elements equal to $\reject$.
Intuitively, we select $\hX^{(1)}$ if it is accepted ($A^{(1)}=\accept$); we select $\hX^{(k)}$ when all previous draft tokens $\hX^{(1)}, ..., \hX^{(k-1)}$ are rejected \emph{and} $\hX^{(k)}$ is accepted ($A^{(1:k-1)}=\reject^{k-1}, A^{(k)}=\accept$) for each $k$; we sample $Y\sim q^{(K+1)}(\cdot|\hX^{(1:K-1)})$ and select $Y$ if all draft tokens are rejected ($A^{(1:K)}=\reject^{K}$). 
We summarize the entire process of recursive rejection sampling in \textbf{Algorithm~\ref{alg:rrs_abs}}.
Note that the original rejection sampling~\cite{leviathan2023fast,chen2023accelerating} is a special case of our recursive rejection sampling with $K=1$. 
Also, it can be shown that recursive rejection sampling \eqref{eq:recursive_rejection_sampling} always recovers the target distribution $q$:
\begin{theorem}[Recursive rejection sampling recovers target distribution]
\label{thm:1}
For the random variable $Z\in\mathcal{X}$ in \eqref{eq:recursive_rejection_sampling}, 
\begin{align*}
    \prob\{Z=z\}=q(z), z\in\mathcal{X}.
\end{align*}
\end{theorem}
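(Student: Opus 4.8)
The plan is to prove, by downward induction on $k=K+1,K,\dots,1$, a conditional strengthening of the claim: for every history $x^{(1:k-1)}\in\mathcal{X}^{k-1}$, the conditional law of $Z$ given the events $\{\hX^{(1:k-1)}=x^{(1:k-1)}\}$ and $\{A^{(1:k-1)}=\reject^{k-1}\}$ equals $q^{(k)}(\cdot\mid x^{(1:k-2)})$ (for $k=1$ this conditioning is vacuous). Specializing to $k=1$ then yields $\prob\{Z=z\}=q^{(1)}(z)=q(z)$, which is \cref{thm:1}. Carrying the full history as a conditioning variable is the key bookkeeping device: since the draft tokens are dependent, fixing $\hX^{(1:k-1)}$ is what decouples the ``future'' — the draws $\hX^{(k)},\dots,\hX^{(K)}$, the acceptance coins $A^{(k)},\dots,A^{(K)}$, and the final draw $Y$ — from the past, so the recursion can be unrolled one round at a time.

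\emph{Base case} $k=K+1$: on $\{A^{(1:K)}=\reject^{K}\}$ we have $Z=Y\sim q^{(K+1)}(\cdot\mid\hX^{(1:K-1)})$ by \eqref{eq:recursive_rejection_sampling}; since $Y$ is drawn independently of the acceptance coins and, given $\hX^{(1:K-1)}$, of $\hX^{(K)}$, its conditional law is $q^{(K+1)}(\cdot\mid x^{(1:K-1)})$, as required. \emph{Inductive step}: assume the statement for $k+1$. Conditioned on $\{\hX^{(1:k-1)}=x^{(1:k-1)}\}$ (and hence also on $\{A^{(1:k-1)}=\reject^{k-1}\}$, which involves only that history and independent coins), the token $\hX^{(k)}$ has law $p^{(k)}(\cdot\mid x^{(1:k-1)})$ and, by \eqref{eq:threshold}, is accepted with probability $\min\{1,\,q^{(k)}(\hX^{(k)}\mid x^{(1:k-2)})/p^{(k)}(\hX^{(k)}\mid x^{(1:k-1)})\}$. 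Two elementary identities follow: (i) $\prob\{\hX^{(k)}=z,\,A^{(k)}=\accept\mid x^{(1:k-1)}\}=\min\{p^{(k)}(z\mid x^{(1:k-1)}),\,q^{(k)}(z\mid x^{(1:k-2)})\}$; and (ii) $\prob\{\hX^{(k)}=w,\,A^{(k)}=\reject\mid x^{(1:k-1)}\}=[\,p^{(k)}(w\mid x^{(1:k-1)})-q^{(k)}(w\mid x^{(1:k-2)})\,]^{+}$, whose sum over $w$ equals $\beta^{(k)}:=\sum_{w}[\,q^{(k)}(w\mid x^{(1:k-2)})-p^{(k)}(w\mid x^{(1:k-1)})\,]^{+}$, since $p^{(k)}(\cdot\mid x^{(1:k-1)})$ and $q^{(k)}(\cdot\mid x^{(1:k-2)})$ are both probability vectors.

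Now condition on round $k$: either $\hX^{(k)}=z$ is accepted and $Z=z$, contributing (i); or $\hX^{(k)}=w$ is rejected, after which the history becomes $(x^{(1:k-1)},w)$ and, by the inductive hypothesis, the conditional law of $Z$ is $q^{(k+1)}(\cdot\mid x^{(1:k-1)})$ — crucially \emph{not} depending on the rejected value $w$, because the level-$(k+1)$ residual in \eqref{eq:q_res} is conditioned only on the history up to index $k-1$. Hence the sum over $w$ factors, and combining (ii) with the immediate identity $q^{(k+1)}(z\mid x^{(1:k-1)})\,\beta^{(k)}=[\,q^{(k)}(z\mid x^{(1:k-2)})-p^{(k)}(z\mid x^{(1:k-1)})\,]^{+}$ (read off from the definition of $\norm$), the conditional law of $Z$ equals $\min\{p^{(k)},q^{(k)}\}+[\,q^{(k)}-p^{(k)}\,]^{+}$, which is $q^{(k)}(z\mid x^{(1:k-2)})$ by the pointwise identity $\min\{a,b\}+[b-a]^{+}=b$. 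This closes the induction.

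The one point requiring care is the degenerate case $\beta^{(k)}=0$, i.e.\ $q^{(k)}(\cdot\mid x^{(1:k-2)})\le p^{(k)}(\cdot\mid x^{(1:k-1)})$ pointwise: then $q^{(k+1)}$ is a $0/0$ expression, but the rejection probability at round $k$ is $\beta^{(k)}=0$, so round $k+1$ is never reached from this history and the ill-defined term carries weight $0$; since $[\,q^{(k)}-p^{(k)}\,]^{+}=0$ there as well, the final identity still holds. Apart from this, the argument is just bookkeeping of the conditioning events plus the two algebraic facts ($\min\{a,b\}+[b-a]^{+}=b$ and $\sum_w[p-q]^{+}=\sum_w[q-p]^{+}$ for probability vectors); I expect the genuinely delicate step to be justifying that conditioning on $\hX^{(1:k-1)}$ really makes $\hX^{(k)},\dots$ and the acceptance coins behave as a fresh copy of the process, and that the level-$(k+1)$ residual is free of the rejected value $w$ — which is exactly what the definitions in \eqref{eq:q_res}–\eqref{eq:threshold} are arranged to ensure.
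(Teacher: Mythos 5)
Your proposal is correct and follows essentially the same route as the paper's proof: a backward recursion from $k=K$ down to $k=1$ showing that, conditioned on the history and on all prior rejections, the law of $Z$ is the residual $q^{(k)}$, via the identities $\min\{p,q\}+[q-p]^{+}=q$ and $\sum_w[p-q]^{+}=\sum_w[q-p]^{+}$ and the fact that $q^{(k+1)}(\cdot\mid x^{(1:k-1)})$ does not depend on the rejected token $x^{(k)}$. The paper phrases this as a telescoping identity $\Sigma_{2,k-1}=\Sigma_{1,k}+\Sigma_{2,k}$ on unconditional joint probabilities rather than as an explicit induction on conditional laws, and your explicit treatment of the degenerate case $\beta^{(k)}=0$ is a detail the paper leaves implicit; otherwise the arguments coincide.
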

\begin{proof}
    See Appendix~\ref{appendix:A:1}.
\end{proof}
\vspace{-0.1in}
Although the proposed recursive rejection sampling is applicable to arbitrary distributions with dependencies following \eqref{eq:draft_samples}, we assume a single draft model (as in SpecTr~\cite{sun2023spectr} and focus on the cases where the draft model samples predictive tokens without replacement, which is an instance of \eqref{eq:draft_samples}.

\textbf{Toy example.} We present a didactic example with Bernoulli distributions (given by \citet{sun2023spectr}) to showcase the benefit of  recursive rejection sampling.
Suppose that Bernoulli distributions are used for both draft and target models and only $K=2$ tokens are allowed for draft proposals. The acceptance rates for different methods are depicted in Figure~\ref{fig:toy}; multi-round speculative decoding (from SpecInfer~\cite{miao2023specinfer}), K-SEQ and Optimal Transport with Membership costs (OTM)~\cite{sun2023spectr}, use sampling \emph{with} replacement, whereas recursive rejection sampling uses sampling \emph{without} replacement; note that both K-SEQ and OTM were presented in SpecTr paper~\cite{sun2023spectr} where OTM shows theoretically optimal acceptance rate.
For all the baselines, acceptance rates decrease as the discrepancy between draft and target distribution increases, since tokens sampled from draft models become more unlikely from target models. 
On the other hand, recursive rejection sampling achieves 100\% acceptance rate even with high draft-target-model discrepancy; once the first draft token is rejected, the second draft token is always aligned with the residual distribution. This example shows that draft distributions with dependencies, e.g., sampling-without-replacement distribution, leads to higher acceptance rate and becomes crucial, especially for the cases with higher distributional discrepancy between draft and target.


\newcommand{\vocabsize}{N_{\mathrm{vocab}}}
\newcommand{\inputseq}{\mathbf{x}_{\mathrm{input}}}
\newcommand{\outputlen}{L_{\mathrm{output}}}
\newcommand{\inputlen}{L_{\mathrm{input}}}
\newcommand{\subs}{\leftarrow}
\newcommand{\numtreenodes}{N_{tree}}
\newcommand{\attentionmask}{\mathbf{M}}
\newcommand{\createemptytensor}{\mathtt{CreateEmptyTensor}}
\newcommand{\createdrafttreeconst}{\mathtt{CreateDraftTreeConst}}
\newcommand{\createdrafttreesbs}{\mathtt{CreateDraftTreeStochasticBeamSearch}}
\newcommand{\getlength}{\mathtt{GetLength}}
\newcommand{\kvcache}{\mathbf{C}}
\newcommand{\kvcachedraft}{\kvcache_{\mathrm{draft}}}
\newcommand{\kvcachetarget}{\kvcache_{\mathrm{target}}}

\newcommand{\listlogprobsdraft}{\mathcal{L}_{\mathrm{log\_probs\_draft}}}
\newcommand{\listlogprobstarget}{\mathcal{L}_{\mathrm{log\_probs\_target}}}
\newcommand{\listnumnodes}{\mathcal{L}_{\mathrm{num\_nodes}}}
\newcommand{\listflatnodeids}{\mathcal{L}_{\mathrm{flat\_node\_ids}}}
\newcommand{\listflatnodeidsaccepted}{\mathcal{L}_{\mathrm{accepted\_flat\_node\_ids}}}
\newcommand{\listparentids}{\mathcal{L}_{\mathrm{parent\_ids}}}
\newcommand{\listdrafttokens}{\mathcal{L}_{\mathrm{draft\_tokens}}}
\newcommand{\listdrafttokensaccepted}{\mathcal{L}_{\mathrm{accepted\_draft\_tokens}}}
\newcommand{\listpositionids}{\mathcal{L}_{\mathrm{position\_ids}}}
\newcommand{\drafttree}{\mathcal{T}}
\newcommand{\beam}{\mathcal{B}}
\newcommand{\positionids}{\mathbf{id}_{\mathrm{position}}}
\newcommand{\parentids}{\mathbf{id}_{\mathrm{parent}}}
\newcommand{\flatnodeids}{\mathbf{id}_{\mathrm{flat\_node}}}
\newcommand{\nodeid}{i_{node}}
\newcommand{\flatnodeidsaccepted}{\mathbf{id}_{\mathrm{accepted\_flat\_node}}}
\newcommand{\currdraftlen}{l_{\mathrm{draft}}}
\newcommand{\logprobs}{\boldsymbol{\Phi}}
\newcommand{\sumlogprob}{\boldsymbol{\Sigma}}
\newcommand{\truncatedgumbel}{\boldsymbol{\Gamma}}
\newcommand{\perturbedlogprobs}{\tilde{\boldsymbol{\Phi}}}
\newcommand{\maxperturbedlogprobs}{\tilde{\boldsymbol{\Phi}}_{\mathrm{max}}}
\newcommand{\logprobsdraft}{\logprobs_{\mathrm{draft}}}
\newcommand{\logprobstarget}{\logprobs_{\mathrm{target}}}
\newcommand{\logprob}{\boldsymbol{\phi}}
\newcommand{\logprobdraft}{\logprob_d}
\newcommand{\logprobtarget}{\logprob_t}
\newcommand{\draftmodelforwardpass}{\mathtt{DraftModelForwardPass}}
\newcommand{\draftinputseq}{\mathbf{x}_{in,d}}
\newcommand{\samplewithgumbeltopk}{\mathtt{SampleWithGumbelTopK}}
\newcommand{\samplewithstochasticbeam}{\mathtt{SampleWithStochasticBeam}}
\newcommand{\concatenate}{\mathtt{Concat}}
\newcommand{\stack}{\mathtt{Stack}}
\newcommand{\buildattentionmask}{\mathtt{BuildAttentionMask}}
\newcommand{\arange}{\mathtt{Arange}}
\newcommand{\astart}{\mathtt{start}}
\newcommand{\aend}{\mathtt{end}}
\newcommand{\dimension}{\mathtt{dim}}
\newcommand{\tokens}{\mathbf{x}}
\newcommand{\drafttokens}{\tokens_{\mathrm{draft}}}
\newcommand{\acceptedtokens}{\tokens_{\mathrm{accepted}}}
\newcommand{\lasttoken}{\tokens_{\mathrm{last}}}
\newcommand{\numnodecurr}{N_{\mathrm{tree\_curr}}}
\newcommand{\numnodeprev}{N_{\mathrm{tree\_prev}}}
\newcommand{\append}{\mathtt{.append}}
\newcommand{\flatten}{\mathtt{.flatten}}
\newcommand{\clamp}{\mathtt{.clamp}}
\newcommand{\filldiagonal}{\mathtt{.fill\_diagonal}}
\newcommand{\splittensor}{\mathtt{SplitTensor}}
\newcommand{\numnodes}{N_{\mathrm{nodes}}}
\newcommand{\tokenisaccepted}{\mathrm{accept}}
\newcommand{\istrue}{\mathrm{True}}
\newcommand{\isfalse}{\mathrm{False}}
\newcommand{\gumbel}{\mathbf{G}}

\newcommand{\getpositionidsfromtree}{\mathtt{GetPositionIDsFromTree}}
\newcommand{\targetmodelforwardpass}{\mathtt{TargetModelForwardPass}}
\newcommand{\recursiverejectionsampling}{\mathtt{RecursiveRejectionSampling}}
\newcommand{\filterkvcache}{\mathtt{FilterKVCache}}
\newcommand{\logsumexp}{\mathtt{LogSumExp}}
\newcommand{\createidentitymatrixdiagonalmatrix}{\mathtt{Create}}
\newcommand{\zeropadding}{\mathtt{ZeroPadding}}

\newcommand{\Comment}[1]{{\color{gray}\texttt{#1}}}
\newcommand{\Red}[1]{{\color{red}#1}}

\subsection{Tree-Based Speculative Decoding with Recursive Rejection Sampling}
\label{sec:tree-based-sd-with-rrs}

Recursive rejection sampling is applicable to tree-based speculative decoding algorithms if sampling without replacement is used to construct a \emph{draft-token tree}. Two Recursive Speculative Decoding (RSD) algorithms using recursive rejection sampling are presented in this section, while they share the same pipeline for parallel target evaluation and draft tree verification after building the draft-token tree (See \emph{Figure~\ref{fig:rsd_big_picture}.}).
We describe details about how RSD works in the following sections.

\subsubsection{Draft-token tree generation}

We consider two RSD algorithms: \textbf{RSD} with \textbf{C}onstant branching factors (RSD-C) and \textbf{RSD} with \textbf{S}tochastic Beam Search (RSD-S). RSD-C builds the draft-token tree having constant branching factors, which makes sequences from the tree to have the same length. RSD-S, on the other hand, builds the tree via Stochastic Beam Search~\cite{kool2019stochastic} that 
samples \emph{draft sequences} without replacement, while truncating sequences that are unlikely to be generated from the draft model and efficiently handling the computational cost. 

\textbf{RSD with Constant branching factors (RSD-C).} Let $L$ denote the fixed length for all draft sequences, which is equivalent to the depth of the draft-token tree, and $\btau_0^{(1)}$ denote the input sequence of tokens. 
Let us assume that the tree level increases from root ($l=0$) to leaf ($l=L$) nodes, where each node is characterized by the (partial) sequence.
We also define $\mathbf{b}:=(b_0, ..., b_{L-1})$ where $b_l$ is the branching factor at the level $l$ (See \textit{Figure~\ref{fig:rsd_example}(a)} for the example with $\mathbf{b}=(3,2,1)$.).

At each level $l\in\{0, ..., L-1\}$ of the draft tree, we begin with  
$N_l$ sequences 
$
\btau_l^{(k)}, k=1, ..., N_l   
$
generated from the previous level, where
$N_0:=1$ and $N_l:=\prod_{l'=0}^{l-1}b_{l'}$ for $l\ge 1$. Then, we evaluate log probabilities $\phi_{l}(\btau_l^{(k)}, \cdot)$ and perturbed log probabilities $\tilde{\phi}_{l}(\btau_l^{(k)}, \cdot)$ for each $k$, i.e., 
for i.i.d. Gumbel samples $G_l^{(k)}(x), x\in\mathcal{X},$
\begin{align}
    \phi_{l}(\btau_l^{(k)}, \cdot)&:=\log p(\cdot|\btau_l^{(k)}),
    \label{eq:log_probs}
    \\
    \tilde{\phi}_{l}(\btau_l^{(k)}, \cdot)&:=\phi_{l}(\btau_l^{(k)}, \cdot)+G_l^{(k)},
    \label{eq:per_log_probs}
\end{align}
where both log probabilities and Gumbel samples can be computed in parallel; proper positional encodings and attention masking~\cite{medusa, miao2023specinfer} are required for the parallel log-probability computation when transformer architecture is used~\cite{vaswani2017attention}.
By using \emph{Gumbel-Top-$k$ trick}~\cite{vieira2014gumbel,kool2019stochastic} with perturbed log probabilities \eqref{eq:per_log_probs}, one can sample top-$b_l$ tokens without replacement for each sequence $\tau_l^{(k)}$:
\begin{align}
    \hX_{l+1}^{((k-1)b_l+1)}, ..., \hX_{l+1}^{((k-1)b_l+b_l)}
    =\underset{x\in\mathcal{X}}{\mathrm{argtop}\text{-}b_l}\left(\tilde{\phi}_{l}(\btau_l^{(k)}, x)\right).
    \label{eq:rsd_c_level_l}
\end{align}
Note that the outputs $\hX_{l+1}^{((k-1)b_l+k')}, k'=1, ..., b_l,$ in \eqref{eq:rsd_c_level_l} are assumed to be in the decreasing order of values $\tilde{\phi}_{l}(\btau_l^{(k)}, \hX_{l+1}^{((k-1)b_l+k')})$, 
for each $k$. 
Finally, we define
\begin{align}
    O_{l+1}^{((k-1)b_l+k')}&:=(\hX_{l+1}^{((k-1)b_l+k')}, k),\label{eq:pairs_def}\\
    \btau_{l+1}^{((k-1)b_l+1)}&:=(\btau_l^{(k)}, \hX_{l+1}^{((k-1)b_l+1)})
\end{align}
for $k\in{1, ..., N_l}$ and $k'\in\{1, ..., b_l\}$, where $O_{l+1}^{((k-1)b_l+k')}$ is a pair of draft token and parent sequence index.
Those pairs in \eqref{eq:pairs_def} are stored for all levels $l=0, ..., L-1$ and used for draft tree verification, 
which exploits
the fact that \emph{the tokens
$\hX_{l+1}^{((k-1)b_l+1)}, ..., \hX_{l+1}^{((k-1)b_l+b_l)}$
follow sampling without replacement from $p(\cdot|\btau_l^{(k)})$ for any given parent sequence index $k$}.

\begin{figure}[t]
\begin{center}
\begin{tikzpicture}
\node at (0,0) {\includegraphics[width=\columnwidth]{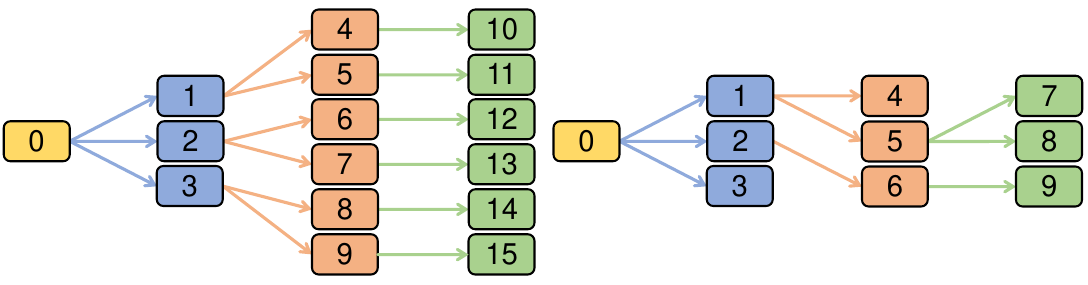}};
\node at (-2, -1.5){\small \textit{(a)} RSD-C};
\node at ( 2, -1.5){\small \textit{(b)} RSD-S};
\end{tikzpicture}
\vspace{-0.2in}
\caption{We describe examples of constructing draft-token trees with the (maximum) draft length equal to 3; \textit{(a)} The tree constructed by RSD-C with branching factors $\mathbf{b}=(3, 2, 1)$ is given; \textit{(b)} we depict the tree constructed by RSD-S with beamwidth $W=3$, where edges are determined via Stochastic Beam Search. 
}
\label{fig:rsd_example}
\end{center}
\vspace{-0.2in}
\end{figure}

\textbf{RSD with Stochastic Beam Search (RSD-S).} 
One caveat of RSD-C is that its constant branching factors $\mathbf{b}$ should be carefully determined to handle tree complexity, when the computation budget is limited; for example, if $\mathbf{b}=(n, ..., n)$ with its length $L$, the number of nodes in the draft tree will be $\sum_{l=0}^{L-1}n^l=O(n^{L-1})$, which is computationally prohibitive for large $n$ and $L$. Also, RSD-C constructs sequences at each level $l$ by using the \emph{myopic} token-wise log probabilities $\phi_l$ in \eqref{eq:per_log_probs}.
RSD-S addresses both issues by using Stochastic Beam Search~\cite{kool2019stochastic} that early-truncates unlikely sequences and utilizes \emph{far-sighted} sequence log probabilities.

Let us define the \emph{maximum} draft sequence length $L$ and the beamwidth $W$.
We also define $\btau_0^{(1)}$ as the input sequence similar to RSD-C. 
At each level $l\in\{0, ..., L-1\}$, SBS uses \emph{beam}
\begin{align*}
\mathcal{B}_l
&:=(t_l^{(1)},...,t_l^{(W)}),\\
t_l^{(k)}
&:=
(
\btau_{l}^{(k)}, \phi_{l-1}(\btau_{l}^{(k)}), \psi_{l-1}(\btau_{l}^{(k)})
)
\end{align*}
generated from the previous level $l-1$\footnote{For $l=0$, $\phi_{-1}(\btau_{0}^{(1)})=\phi_{-1}(\btau_{0}^{(1)})=0$ is used with $\mathcal{B}_0:=(t_0^{(1)})$~\cite{kool2019stochastic}.}.
Here, each tuple $t_l^{(k)}$ for $k\in\{1, ..., W\}$ consists of
\emph{(a)} a sequence $\btau_{l}^{(k)}$,
\emph{(b)} its \emph{sequence} log probability $\phi_{l-1}(\btau_{l}^{(k)})$ of $\btau_{l}^{(k)}$, 
and
\emph{(c)} the \emph{transformed (perturbed and truncated)} sequence log probability $\psi_{l-1}(\btau_{l}^{(k)})$, 
respectively. 

For each tuple $t_l^{(k)}$ in the beam $\mathcal{B}_l$, we evaluate the (next-level) sequence log probabilities $\phi_{l}(\btau_l^{(k)}, \cdot)$ and the perturbed sequence log probabilities $\tilde{\phi}_{l}(\btau_l^{(k)}, \cdot)$. Specifically for i.i.d. Gumbel samples $G_l^{(k)}(x), x\in\mathcal{X}$, we compute
\begin{align*}
    \phi_{l}
    (\btau_l^{(k)}, \cdot)
    &:=
    \Red{\phi_{l-1}(\btau_l^{(k)})}+
    \log p(\cdot|\Red{\btau_l^{(k)}}),\\
    \tilde{\phi}_{l}(\btau_l^{(k)}, \cdot)
    &:=\phi_{l}(\btau_l^{(k)}, \cdot)+G_l^{(k)},
\end{align*}
where the terms \Red{$\btau_{l}^{(k)}$} and \Red{$\phi_{l-1}(\btau_{l}^{(k)})$} within the tuple $t_l^{(k)}$ of within the beam $\mathcal{B}_l$ are reused.
Similar to RSD-C,  both log probabilities and Gumbel samples can be parallelly computed with positional encodings and attention masking~\cite{medusa, miao2023specinfer}.
In addition to the perturbed log probabilities, SBS in RSD-S transforms $\tilde{\phi}_{l}(\btau_l^{(k)}, \cdot)$ into the truncated function
\begin{align}
    \psi_l(\btau_l^{(k)}, \cdot)
    &:=T(\Red{\psi_{l-1}(\btau_l^{(k)})}, \tilde{\phi}_{l}(\btau_l^{(k)}, \cdot)),
    \label{eq:transformation}
    \\
    T(u, \phi)&:=
    -\log
    \left(
        e^{-u}
        -
        e^{-\mathrm{max}\phi}
        +
        e^{-\phi(\cdot)}
    \right)
    \label{eq:mapping}
\end{align}
for $\max\phi:=\max_{x\in\mathcal{X}}\phi(x)$
by reusing \Red{$\psi_{l-1}(\btau_l^{(k)})$} in $t_l^{(k)}$ . 
Note that $T(u,\phi)$ in \eqref{eq:mapping} is \emph{monotonically increasing} w.r.t. $\phi$ and transforms $\phi$ to the function with the upper bound $u$~\cite{kool2019stochastic}\footnote{In Appendix B.3 of \citet{kool2019stochastic}, a numerical stable way of evaluating the function $T$ in \eqref{eq:mapping} is provided.}

After evaluating $\psi_l(\btau_l^{(k)}, \cdot)$ for all parent sequences $\btau_l^{(k)}$s, SBS selects top-$W$ pairs $(\hX_{l+1}, p_{l+1})$ of draft token and parent sequence index \emph{across the beam $\mathcal{B}_l$}, i.e., 
\begin{align}
    O_{l+1}^{(1)}, ..., O_{l+1}^{(W)}
    :=\underset{(x, k)\in\mathcal{X}\times\mathcal{K}}{\mathrm{argtop}\text{-}W}
    \left(\psi_l(\btau_l^{(k)},x)\right)
    \label{eq:argtop_pair}
\end{align}
for $O_{l+1}^{(k)}:=(\hX_{l+1}^{(k)}, p_{l+1}^{(k)})$ and $\mathcal{K}:=\{1, ..., W\}$.
The output pairs $O_{l+1}^{(1)}, ..., O_{l+1}^{(W)}$ are given by \emph{corresponding values $\psi_l(\btau_l^{(k)},\hX_{l+1}^{(k)})$ in the decreasing order}. 
Finally, we construct the next beam
\begin{align*}
    \mathcal{B}_{l+1}&:=(t_{l+1}^{(1)}, ..., t_{l+1}^{(W)}),\\
    t_{l+1}^{(k)}&:=(
        (\hat{\btau}_{l+1}^{(k)}, \hX_{l+1}^{(k)}),
        \phi_l(\hat{\btau}_{l+1}^{(k)}, \hX_{l+1}^{(k)}),
        \psi_l(\hat{\btau}_{l+1}^{(k)}, \hX_{l+1}^{(k)})
    )
\end{align*}
for $k=1, ..., W$, where $\hat{\btau}_{l+1}^{(k)}:=\btau_{l}^{(p_{l+1}^{(k)})}$ is the selected parent sequence. 
Intuitively, SBS at the level $l$ evaluates scores $\psi_l^{(k)}(\btau_l^{(k)},x), x\in\mathcal{X},k\in\mathcal{K},$ by considering all child nodes from the beam $\mathcal{B}_l$. 
SBS selects $W$ nodes among all child nodes having top-$W$ scores.
Note that the above process is theoretically equivalent to sample top-$W$ length-$(l+1)$ sequences \emph{without replacement}~\cite{kool2019stochastic} and efficiently truncates sequences that are unlikely to be generated. (See \emph{Figure \ref{fig:rsd_example}(b)}.)

We store the \emph{ordered} sequence of pairs $O_{l+1}^{(1)}, ..., O_{l+1}^{(W)}$ for all levels $l=0, ..., L-1$, which is used for draft-tree verification. 
As in RSD-C, we show the following property:
\begin{theorem}[Tokens from the same sequence follow sampling without replacement in RSD-S]
\label{thm:2}
In RSD-S, any non-empty subsequence of the sequence $\hat{X}_{l+1}^{(1)}, ..., \hat{X}_{l+1}^{(W)}$ of draft tokens (from 
$O_{l+1}^{(1)}, ..., O_{l+1}^{(W)}$ in \eqref{eq:argtop_pair}) such that each element of the subsequence has the same parent $\btau_l^{(k)}$ follows sampling without replacement from $p(\cdot|\btau_l^{(k)})$\footnote{We define a subsequence of a sequence as any sequence acquired by removing its elements \emph{while maintaining the order in the original sequence}.}.
\end{theorem}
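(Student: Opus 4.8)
The plan is to reduce the statement, one tree level at a time, to the \emph{Gumbel-Top-$k$} sampling-without-replacement property, and then to show that the \emph{number} of children a given parent contributes to the next beam carries no information about \emph{which} children those are. Throughout I fix a level $l$ and work conditionally on the beam $\mathcal{B}_l$ produced at level $l-1$. This is legitimate because $\mathcal{B}_l$ enters the level-$l$ computation only through the scalars $\psi_{l-1}(\btau_l^{(k)})$, i.e.\ only through the strictly increasing map $T(\psi_{l-1}(\btau_l^{(k)}),\cdot)$ in \eqref{eq:mapping} applied to the children's perturbed log-probabilities; I will use nothing about $T$ beyond monotonicity in its second argument. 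I also use that the Gumbel batches $G_l^{(k)}$ are i.i.d.\ across tokens and independent across the parent index $k$, and that ties occur with probability zero.

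First I would record two facts about the children of a fixed parent $\btau_l^{(k)}$. (i) Since $\tilde\phi_l(\btau_l^{(k)},x)=\phi_{l-1}(\btau_l^{(k)})+\log p(x\mid\btau_l^{(k)})+G_l^{(k)}(x)$ differs from $\log p(x\mid\btau_l^{(k)})+G_l^{(k)}(x)$ only by the $x$-independent shift $\phi_{l-1}(\btau_l^{(k)})$, the ranking of $x\mapsto\tilde\phi_l(\btau_l^{(k)},x)$ is exactly the Gumbel-Top-$k$ ranking of $p(\cdot\mid\btau_l^{(k)})$; by the strong form of the Gumbel-Top-$k$ trick~\cite{vieira2014gumbel,kool2019stochastic} the induced full permutation $z_1,z_2,\dots$ of $\mathcal{X}$ is distributed as \emph{sampling without replacement} from $p(\cdot\mid\btau_l^{(k)})$ and is moreover \emph{independent of the associated order statistics} (the sorted vector of perturbed values). (ii) Because $T(\psi_{l-1}(\btau_l^{(k)}),\cdot)$ is strictly increasing, the ranking of $x\mapsto\psi_l(\btau_l^{(k)},x)$ coincides with the ranking in (i). Hence, if exactly $m_k$ children of $\btau_l^{(k)}$ survive the global $\mathrm{argtop}\text{-}W$ of \eqref{eq:argtop_pair}, they must be the $m_k$ top-ranked children $z_1,\dots,z_{m_k}$ (a $\psi_l$-larger child can never be dropped while a $\psi_l$-smaller child of the same parent is kept), and by the stated decreasing-$\psi_l$ output order they appear in $O_{l+1}^{(1)},\dots,O_{l+1}^{(W)}$ precisely as $z_1,\dots,z_{m_k}$. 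Thus the subsequence of $\hat X_{l+1}^{(1)},\dots,\hat X_{l+1}^{(W)}$ consisting of all tokens with parent $\btau_l^{(k)}$ is the length-$m_k$ prefix $(z_1,\dots,z_{m_k})$ of the permutation from (i).

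Next I would argue that conditioning on the value of $m_k$ does not change the law of $(z_1,z_2,\dots)$. Write $s_1>s_2>\cdots$ for the sorted $\psi_l$-values of the children of $\btau_l^{(k)}$ and $c_1>c_2>\cdots$ for the sorted $\psi_l$-values of all children of the other parents. Then $\{m_k=m\}$ is exactly $\{\,s_m>c_{W-m+1}\,\}\cap\{\,s_{m+1}<c_{W-m}\,\}$ (with the obvious conventions at the boundary), so it is measurable with respect to the order statistics $s_\bullet$ of $\btau_l^{(k)}$'s children together with everything attached to the other parents, hence with $(G_l^{(j)})_{j\neq k}$. But, given $\mathcal{B}_l$, $s_\bullet$ is a fixed increasing image (namely $T(\psi_{l-1}(\btau_l^{(k)}),\cdot)$) of the order statistics of $\tilde\phi_l(\btau_l^{(k)},\cdot)$, which by (i) are independent of the permutation $(z_1,z_2,\dots)$; and $(G_l^{(j)})_{j\neq k}$ is independent of everything built from $G_l^{(k)}$. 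A short patching argument then gives that $(z_1,z_2,\dots)$ is independent of $\sigma(s_\bullet,c_\bullet)$ given $\mathcal{B}_l$, so given $\{\mathcal{B}_l,\;m_k=m\}$ the permutation $(z_1,z_2,\dots)$ is still sampling-without-replacement from $p(\cdot\mid\btau_l^{(k)})$, hence so is its length-$m$ prefix. Averaging over $\mathcal{B}_l$ (and over $m$) yields that the parent-$\btau_l^{(k)}$ subsequence follows sampling without replacement from $p(\cdot\mid\btau_l^{(k)})$; since any prefix of a sampling-without-replacement sequence is again one, the subsequences referred to in the statement are covered, and the degenerate level $l=0$ (single-node beam, where the claim is just the Gumbel-Top-$W$ property) is immediate.

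The hard part will be the third paragraph: one must be careful that the \emph{stopping index} $m_k$ — the thing that makes the parent-wise subsequence have random length — genuinely reveals nothing about the identities or order of the surviving children. This is exactly where the \emph{strong} form of the Gumbel-Top-$k$ trick (rank assignment independent of perturbed magnitudes) and the independence of the Gumbel noise across parents are indispensable; without them, conditioning on how many children of $\btau_l^{(k)}$ survive could in principle reweight which children survive and destroy the without-replacement structure. Everything else is bookkeeping around the monotonicity of $T$ and the definition of the level-wise global $\mathrm{argtop}\text{-}W$.
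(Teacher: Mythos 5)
Your proof is correct and follows the same route as the paper's: monotonicity of $T$ reduces the $\psi_l$-ranking of a fixed parent's children to the Gumbel-perturbed ranking of $\log p(\cdot|\btau_l^{(k)})$, which by the Gumbel-Top-$k$ trick is a sampling-without-replacement permutation, and the children of that parent surviving the global $\mathrm{argtop}\text{-}W$ are necessarily a prefix of it, appearing in the stated order. Your third paragraph --- showing that conditioning on the random number $m_k$ of survivors does not reweight the permutation, via the independence of the argsort from the order statistics of the perturbed values and the independence of the Gumbel noise across parents --- addresses a subtlety that the paper's two-line proof leaves entirely implicit, and it is the right way to make that final step rigorous.
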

\vspace{-0.15in}
\begin{proof}
    See Appendix~\ref{appendix:A:2}.
\end{proof}

\subsubsection{Draft-tree evaluation and verification}

\textbf{Tree evaluation with target model.} After the draft-tree construction, we have sequences of pairs
\begin{align*}
    (O_l^{(1)}, ..., O_l^{(N_l)}), l=1, ..., L,
\end{align*}
where $N_l=\prod_{l'=0}^l b_{l'}$ for RSD-C and $N_l=W$ for RSD-S, respectively ($N_0:=1$ for both). Those pairs include the node-connection information of the draft tree and can be used to \emph{parallelly} evaluate the draft tree via the target model by utilizing appropriate attention masking and positional encodings. From the evaluation process, we acquire the target log probabilities for all sequences $\btau_l^{(k_l)}$ in the draft tree, i.e., 
\begin{align*}
    q(\cdot|\btau_l^{(k_l)}), l=0,...,L, k_l=1,...,N_l.
\end{align*}
\textbf{Verification via recursive rejection sampling.} Earlier, we show that tokens in the tree having the same parent sequence $\btau_l^{(k_l)}$ follows the sampling-without-replacement distribution from $p(\cdot|\btau_l^{(k_l)})$ for both RSD-C and RSD-S. 
Thus, one can apply recursive rejection sampling iteratively at each tree level. 

Specifically, at the level $l\in\{0, 1, ..., L\}$, we begin with a sequence $\btau_l^{(k_l')}$ where $k_l'$ is the index of the parent sequence accepted in the previous level ($k_0'=1$ at the level $l=0$). 
Within the \emph{ordered} sequence 
$(O_{l+1}^{(1)}, ..., O_{l+1}^{(N_{l+1})})$ of pairs, we find the subsequence $\mathbf{o}_{l+1}^{(k_l')}$ having $\btau_l^{(k_l')}$ as parent, which can be validated by checking the second element of each pair $O_{l+1}^{(k)}$, and the token sequence $\mathbf{x}_{l+1}^{(k_l')}$ in $\mathbf{o}_{l+1}^{(k_l')}$.
Earlier, we show that tokens $\mathbf{x}_{l+1}^{(k_l')}$ follows sampling-without-replacement distribution in its order, so we can apply recursive rejection sampling to those tokens with draft and target distributions, $p(\cdot|\btau_l^{(k_l')})$ and $q(\cdot|\btau_l^{(k_l')})$, respectively. 
If any token $x$ in $\mathbf{x}_{l+1}^{(k_l')}$ is accepted, we set $k_{l+1}'$ that corresponds to $\btau_l^{(k_{l+1}')}:=(\btau_l^{(k_l')}, x)$, and we continue to the next-level verification if child nodes exist.
If all tokens are rejected, we sample from the last residual distribution of recursiver rejection sampling. 
If there is no child node, we sample from the target $q(\cdot|\btau_l^{(k_l')})$ similar to the single-sequence speculative decoding~\cite{chen2023accelerating,leviathan2023fast}.
We provide detailed descriptions for RSD-C (\textbf{Algorithm \ref{alg:rsd_c}}) and for RSD-S (\textbf{Algorithm \ref{alg:recursive_speculative_decoding_with_stochastic_beam_search}}) in Appendix~\ref{appendix:sec:algorithm}.

\begin{figure*}[t]
\begin{center}
\def\x{\columnwidth+0.2}
\begin{tikzpicture}
    \node at (0,0) {\includegraphics[width=0.95\textwidth]{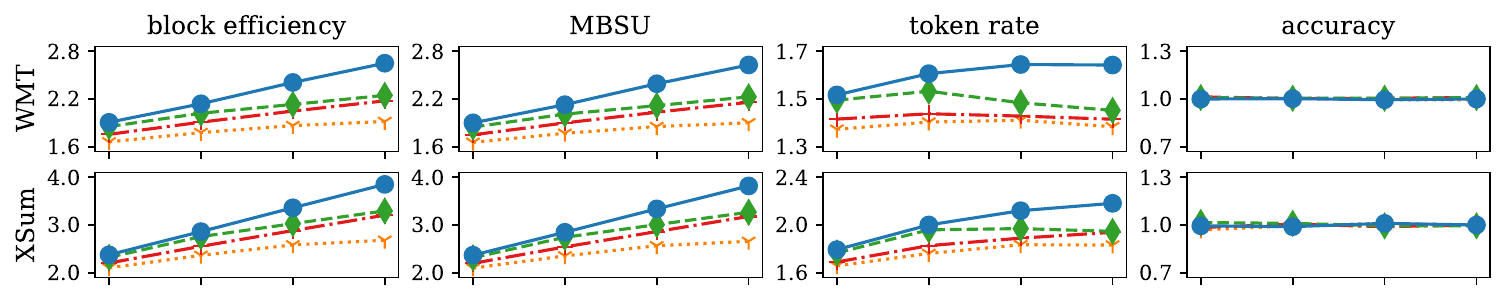}};
    \node [rotate=90] at (-\x, 0.0) {\small Llama 2-70B};
    \node at (0,-4.2) {\includegraphics[width=0.95\textwidth]{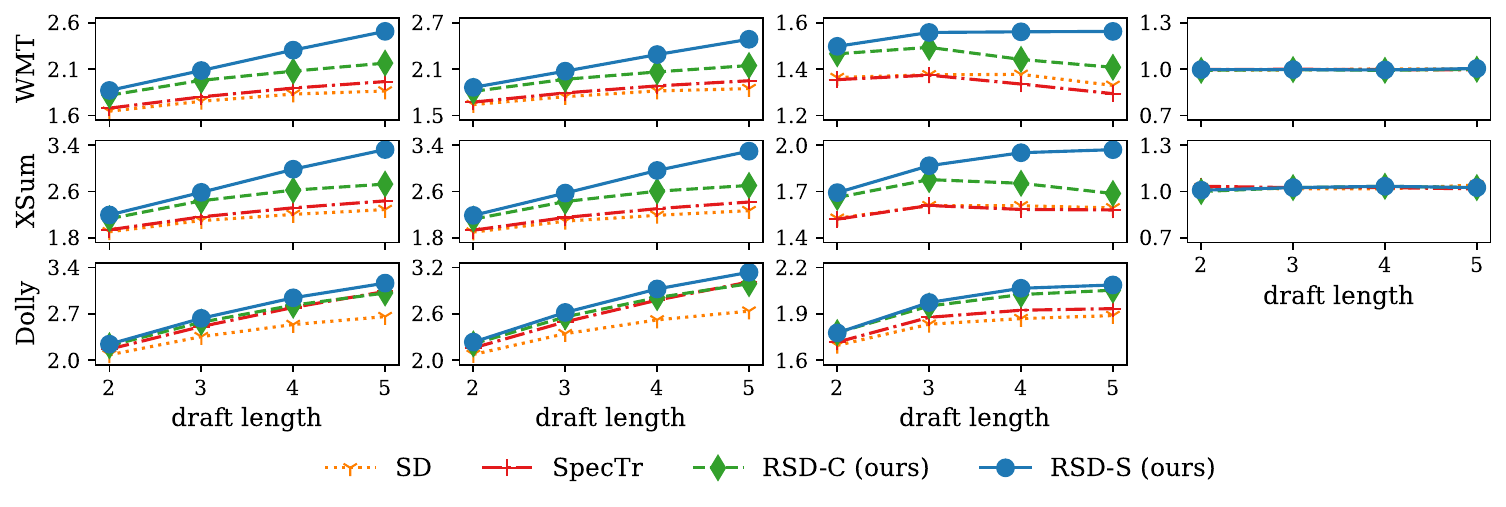}};
    \node [rotate=90] at (-\x, -3.5) {\small Llama 2-Chat-70B};
    \draw [dotted] (-\x, -1.52) -- (\x, -1.52);
\end{tikzpicture}
\vspace{-0.2in}
\caption{Block efficiency, MBSU, token rate and accuracy for various lengths ($2,3,4,5$) of draft sequences are given. We consider two target models, Llama 2-70B and Llama 2-Chat-70B, each of which has a corresponding smaller draft model for speculative decoding. All results are normalized by the corresponding numbers from auto-regressive decoding. RSD-S always outperforms SD, SpecTr and RSD-C.
All methods including auto-regressive decoding show similar accuracy for WMT and XSum.
}
\label{fig:main:exp1}
\end{center}
\vspace{-0.2in}
\end{figure*}

\begin{figure*}[t]
\begin{center}
\def\x{\columnwidth+0.2}
\begin{tikzpicture}
    \node at (0,0) {\includegraphics[width=0.95\textwidth]{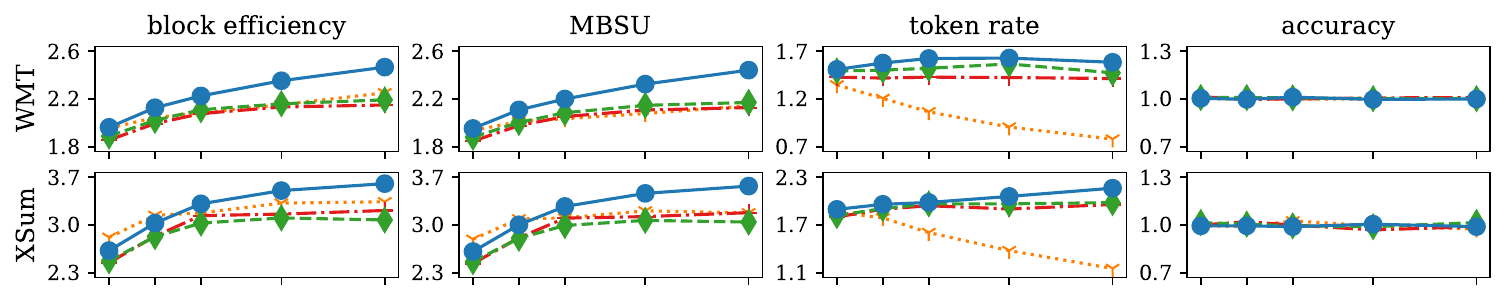}};
    \node [rotate=90] at (-\x, 0.0) {\small Llama 2-70B};
    \node at (0,-4.2) {\includegraphics[width=0.95\textwidth]{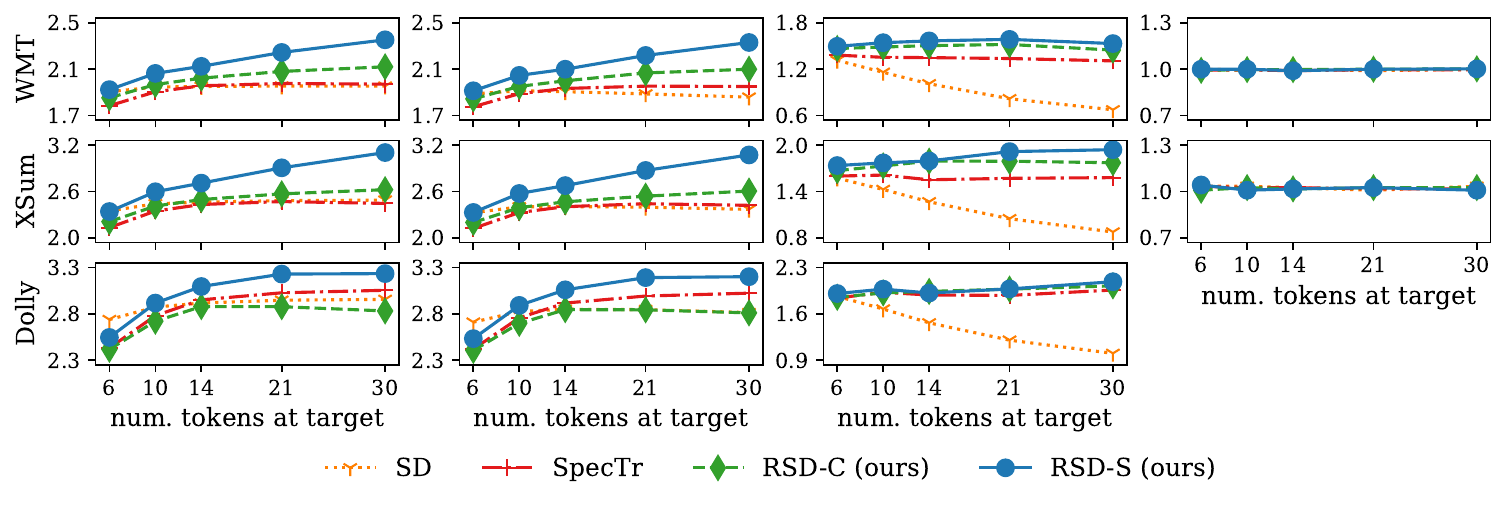}};
    \node [rotate=90] at (-\x, -3.5) {\small Llama 2-Chat-70B};
    \draw [dotted] (-\x, -1.52) -- (\x, -1.52);
\end{tikzpicture}
\vspace{-0.2in}
\caption{Block efficiency, MBSU, token rate and accuracy for various target computational budgets (the numbers $6, 10, 14, 21, 30$ of draft tokens processed at the target model) are given. We consider two target models, Llama 2-70B and Llama 2-Chat-70B, each of which has a corresponding smaller draft model for speculative decoding. All results are normalized by the corresponding numbers from auto-regressive decoding. RSD-S outperforms SD, SpecTr and RSD-C in the majority of cases. 
All methods including auto-regressive decoding show similar accuracy for both WMT and XSum.
}
\label{fig:main:exp2}
\end{center}
\vspace{-0.1in}
\end{figure*}

\section{Related Works}

Many recent works have aimed to address the inference bottleneck of LLMs caused by auto-regressive decoding.
Speculative decoding methods~\cite{leviathan2023fast, chen2023accelerating,sun2023spectr,miao2023specinfer} use the target model (LLM) with a draft model (a small language model), while recovering target distribution via rejection sampling. See the recent survey on speculative decoding~\cite{xia2024unlocking} for more comprehensive understanding.

Other than speculative decoding methods, 
BiLD~\cite{kim2023big} is another method to accelerate inference, where it uses a fallback policy which determines when to invoke the target model and a rollback policy to review and correct draft tokens. 
Medusa~\cite{cai2024medusa} uses multiple decoding heads to predict future tokens in parallel, constructs the draft-token tree and uses a typical acceptance criteria. 
Lookahead decoding~\cite{fu2023lookahead} caches the historical $n$-grams  generated on-the-fly instead of having a draft model and performs parallel decoding using Jacobi iteration and verifies $n$-grams from the cache.
While showing promising results with greedy sampling, these works do not guarantee target distribution recovery in contrast to speculative decoding methods.

\section{Experiments}

We evaluate RSD-C and RSD-S together with our baselines including speculative decoding (SD)~\cite{chen2023accelerating,leviathan2023fast} and SpecTr~\cite{sun2023spectr}, where a single draft model is assumed\footnote{We exclude SpecInfer~\cite{miao2023specinfer} from our baselines since it uses multiple draft models.}.
We consider the following perspectives during our experiments: 
\textbf{(\textit{Exp1})} How will the performance be affected by \emph{the length of draft sequences}? 
\textbf{(\textit{Exp2})} How will the performance be affected by \emph{the target computational budget}, i.e., the number of tokens processed at the target model?
While \textbf{(\textit{Exp1})} has been frequently investigated by existing tree-based speculative decoding methods~\cite{sun2023spectr,miao2023specinfer}, \textbf{(\textit{Exp2})} has not been considered in prior works but has practical importance when running the target model on resource-bounded devices. 
\newline
\textbf{Models.} We consider the following target models; \textbf{Llama 2} and \textbf{Llama 2-Chat}~\cite{touvron2023llama} with \textbf{7B}, \textbf{13B} and \textbf{70B} parameters;
\textbf{OPT}~\cite{zhang2022opt} with \textbf{13B}, \textbf{30B} and \textbf{66B} parameters.
Each class of target models adopts corresponding draft model; see Appendix~\ref{sec:appendix:C:draft_models}.
In this section, we only present Llama 2-70B and Llama 2-Chat-70B results, and other results (Llama 2 with other sizes and OPT) can be found in Appendix~\ref{sec:appendix:C:experiments}.
\newline
\textbf{Tasks.} Our methods and baselines are evaluated for \textbf{WMT}18-DeEn~\citep[translation]{bojar-EtAl:2018:WMT1} and \textbf{XSum}~\citep[summarization]{narayan2018don} for each target model, while we report accuracy scores (BLEU for WMT and ROUGE-2 for XSum) to confirm if the target model's distribution is recovered; 
Databricks-\textbf{Dolly}-15k~\citep[question and answering]{DatabricksBlog2023DollyV2} is used only for Llama 2-Chat without accuracy evaluation. 
We use temperature 0.3 for both XSum and WMT and 1.0 for Dolly, where we further apply nucleus (top-$p$) sampling~\cite{holtzman2019curious} with $p=0.95$ for Dolly.
\newline
\textbf{Performance metrics.} We evaluate \textbf{block efficiency}~\cite{leviathan2023fast}, Memory-Bound Speed-Up (\textbf{MBSU})~\cite{zhou2023distillspec} and \textbf{token rate} (tokens/sec) on A100 GPUs; see Appendix~\ref{sec:appendix:C:metrics} for details.

\vspace{-0.05in}
\subsection{\textbf{\textit{(Exp 1)}} Fixed draft sequence length}

We fix (maximum) draft sequence length as the value in $\{2, 3, 4, 5\}$ and evaluate our methods and baselines, which is summarized in \emph{Figure~\ref{fig:main:exp1}}.
Regarding the tree structures of each decoding methods, we let both SpecTr and RSD-S always use draft-token trees, the size of which is smaller than or equal to that of RSD-C's tree; see Appendix~\ref{sec:appendix:tree_structure:exp1} for details. 
Our results show that tree-based methods (SpecTr, RSD-C and RSD-S) always outperform SD in terms of block efficiency and MBSU, whereas
token rates for SpecTr and RSD-C can be lower than that for SD; 
this is since block efficiencies for both SpecTr and RSD-C are relatively low and there is additional computational overhead to process the tree.
On the other hand, \emph{RSD-S strictly outperforms both SD and SpecTr for all performance metrics}, showing the superiority of RSD-S over our baselines and the importance of early-truncating unlikely draft sequences.
We also observe that there is no strong correlation between MBSU and token rate; this is since A100 GPUs used to measure token rates are \emph{not} memory-bound.
Furthermore, token rates in many cases are shown to decrease as the length of draft-token sequence becomes higher, which is due to the increased computation overhead to execute draft models with the longer draft sequence; however, one needs to be cautious since this result may not generally hold since token rate is hugely affected by the efficiency of software implementation and the devices which we execute the methods on.
Finally, in WMT and XSum, BLEU and ROUGE-2 scores are similar across different methods, respectively, which implies that all methods recover the distributions of target LLMs.

\vspace{-0.05in}
\subsection{\textbf{(\textit{Exp2})} Fixed target computational budget}
We select target computational budget, i.e., the number of draft tokens processed at the target model in parallel for each speculative decoding iteration, among values in $\{6, 10, 14, 21, 30\}$ and evaluate our proposed methods and baselines; we summarize the results in \emph{Figure~\ref{fig:main:exp2}} and describe tree structures in Appendix~\ref{sec:appendix:tree_structure:exp2}.
While RSD-S achieves higher block efficiency and MBSU than SD and SpecTr in most cases, SD beats RSD-C in the relatively low budget regime, e.g., $\{6, 10\}$ with Llama 2-70B and XSum, and $\{6\}$ with Llama 2-Chat-70B and Dolly. 
We believe that our draft models are well-aligned with corresponding target models for those cases (from the observation that block efficiencies of SD close to 3.0, which are significantly higher than the numbers in other cases, are achieved), and increasing the depth rather than the width of the tree could quickly increase the acceptance rate in such cases. 
In the high budget regime, on the other hand, RSD-S beats SD for both block efficiency and MBSU. 
In terms of token rate, RSD-S strictly outperforms our baselines, whereas SD's token rate severely decreases for higher target computation budgets due to the computational overhead caused by the draft model's auto-regressive decoding with the longer draft sequence. 

\vspace{-0.05in}
\section{Conclusion}

We present RSD algorithms, a novel tree-based speculative decoding method leveraging the full diversifiability of the draft-token tree; RSD-C efficiently samples draft tokens without replacement via Gumbel-Top-$k$ trick, while RSD-S uses Stochastic Beam Search and samples draft-token sequences without replacement. We also propose recursive rejection sampling that can verify the tree built by the sampling-without-replacement process and recovers the exact target model distribution. We show that RSD outperforms the baselines in most cases, supporting the importance of diverse drafting when accelerating LLM inference.


\bibliography{icml2024}

\begin{thebibliography}{25}
\providecommand{\natexlab}[1]{#1}
\providecommand{\url}[1]{\texttt{#1}}
\expandafter\ifx\csname urlstyle\endcsname\relax
  \providecommand{\doi}[1]{doi: #1}\else
  \providecommand{\doi}{doi: \begingroup \urlstyle{rm}\Url}\fi

\bibitem[Achiam et~al.(2023)Achiam, Adler, Agarwal, Ahmad, Akkaya, Aleman, Almeida, Altenschmidt, Altman, Anadkat, et~al.]{achiam2023gpt}
Achiam, J., Adler, S., Agarwal, S., Ahmad, L., Akkaya, I., Aleman, F.~L., Almeida, D., Altenschmidt, J., Altman, S., Anadkat, S., et~al.
\newblock {GPT}-4 technical report.
\newblock \emph{arXiv preprint arXiv:2303.08774}, 2023.

\bibitem[Bojar et~al.(2018)Bojar, Federmann, Fishel, Graham, Haddow, Huck, Koehn, and Monz]{bojar-EtAl:2018:WMT1}
Bojar, O.~r., Federmann, C., Fishel, M., Graham, Y., Haddow, B., Huck, M., Koehn, P., and Monz, C.
\newblock Findings of the 2018 conference on machine translation (wmt18).
\newblock In \emph{Proceedings of the Third Conference on Machine Translation, Volume 2: Shared Task Papers}, pp.\  272--307, Belgium, Brussels, October 2018. Association for Computational Linguistics.
\newblock URL \url{http://www.aclweb.org/anthology/W18-6401}.

\bibitem[Brown et~al.(2020)Brown, Mann, Ryder, Subbiah, Kaplan, Dhariwal, Neelakantan, Shyam, Sastry, Askell, et~al.]{brown2020language}
Brown, T., Mann, B., Ryder, N., Subbiah, M., Kaplan, J.~D., Dhariwal, P., Neelakantan, A., Shyam, P., Sastry, G., Askell, A., et~al.
\newblock Language models are few-shot learners.
\newblock \emph{Advances in Neural Information Processing Systems (NeurIPS)}, 33:\penalty0 1877--1901, 2020.

\bibitem[Cai et~al.(2023)Cai, Li, Geng, Peng, and Dao]{medusa}
Cai, T., Li, Y., Geng, Z., Peng, H., and Dao, T.
\newblock Medusa: Simple framework for accelerating llm generation with multiple decoding heads.
\newblock \url{https://github.com/FasterDecoding/Medusa}, 2023.

\bibitem[Cai et~al.(2024)Cai, Li, Geng, Peng, Lee, Chen, and Dao]{cai2024medusa}
Cai, T., Li, Y., Geng, Z., Peng, H., Lee, J.~D., Chen, D., and Dao, T.
\newblock Medusa: Simple llm inference acceleration framework with multiple decoding heads.
\newblock \emph{arXiv preprint arXiv:2401.10774}, 2024.

\bibitem[Chen et~al.(2023)Chen, Borgeaud, Irving, Lespiau, Sifre, and Jumper]{chen2023accelerating}
Chen, C., Borgeaud, S., Irving, G., Lespiau, J.-B., Sifre, L., and Jumper, J.
\newblock Accelerating large language model decoding with speculative sampling.
\newblock \emph{arXiv preprint arXiv:2302.01318}, 2023.

\bibitem[Conover et~al.(2023)Conover, Hayes, Mathur, Xie, Wan, Shah, Ghodsi, Wendell, Zaharia, and Xin]{DatabricksBlog2023DollyV2}
Conover, M., Hayes, M., Mathur, A., Xie, J., Wan, J., Shah, S., Ghodsi, A., Wendell, P., Zaharia, M., and Xin, R.
\newblock Free dolly: Introducing the world's first truly open instruction-tuned llm, 2023.
\newblock URL \url{https://www.databricks.com/blog/2023/04/12/dolly-first-open-commercially-viable-instruction-tuned-llm}.

\bibitem[Fu et~al.(2023)Fu, Bailis, Stoica, and Zhang]{fu2023lookahead}
Fu, Y., Bailis, P., Stoica, I., and Zhang, H.
\newblock Breaking the sequential dependency of {LLM} inference using lookahead decoding, November 2023.
\newblock URL \url{https://lmsys.org/blog/2023-11-21-lookahead-decoding/}.

\bibitem[Holtzman et~al.(2019)Holtzman, Buys, Du, Forbes, and Choi]{holtzman2019curious}
Holtzman, A., Buys, J., Du, L., Forbes, M., and Choi, Y.
\newblock The curious case of neural text degeneration.
\newblock \emph{arXiv preprint arXiv:1904.09751}, 2019.

\bibitem[Jiang et~al.(2023)Jiang, Sablayrolles, Mensch, Bamford, Chaplot, Casas, Bressand, Lengyel, Lample, Saulnier, et~al.]{jiang2023mistral}
Jiang, A.~Q., Sablayrolles, A., Mensch, A., Bamford, C., Chaplot, D.~S., Casas, D. d.~l., Bressand, F., Lengyel, G., Lample, G., Saulnier, L., et~al.
\newblock Mistral 7{B}.
\newblock \emph{arXiv preprint arXiv:2310.06825}, 2023.

\bibitem[Kim et~al.(2023)Kim, Mangalam, Malik, Mahoney, Gholami, and Keutzer]{kim2023big}
Kim, S., Mangalam, K., Malik, J., Mahoney, M.~W., Gholami, A., and Keutzer, K.
\newblock Big little transformer decoder.
\newblock \emph{arXiv preprint arXiv:2302.07863}, 2023.

\bibitem[Kool et~al.(2019)Kool, Van~Hoof, and Welling]{kool2019stochastic}
Kool, W., Van~Hoof, H., and Welling, M.
\newblock Stochastic beams and where to find them: The {G}umbel-{T}op-$k$ trick for sampling sequences without replacement.
\newblock In \emph{Proceedings of the 36th International Conference on Machine Learning (ICML)}, pp.\  3499--3508. PMLR, 2019.

\bibitem[Leviathan et~al.(2023)Leviathan, Kalman, and Matias]{leviathan2023fast}
Leviathan, Y., Kalman, M., and Matias, Y.
\newblock Fast inference from transformers via speculative decoding.
\newblock In \emph{Proceedings of the 40th International Conference on Machine Learning (ICML)}, 2023.

\bibitem[Miao et~al.(2023)Miao, Oliaro, Zhang, Cheng, Wang, Wong, Chen, Arfeen, Abhyankar, and Jia]{miao2023specinfer}
Miao, X., Oliaro, G., Zhang, Z., Cheng, X., Wang, Z., Wong, R. Y.~Y., Chen, Z., Arfeen, D., Abhyankar, R., and Jia, Z.
\newblock Spec{I}nfer: Accelerating generative {LLM} serving with speculative inference and token tree verification.
\newblock \emph{arXiv preprint arXiv:2305.09781}, 2023.

\bibitem[Narayan et~al.(2018)Narayan, Cohen, and Lapata]{narayan2018don}
Narayan, S., Cohen, S.~B., and Lapata, M.
\newblock Don't give me the details, just the summary! {T}opic-aware convolutional neural networks for extreme summarization.
\newblock \emph{arXiv preprint arXiv:1808.08745}, 2018.

\bibitem[Shazeer(2019)]{shazeer2019fast}
Shazeer, N.
\newblock Fast transformer decoding: One write-head is all you need.
\newblock \emph{arXiv preprint arXiv:1911.02150}, 2019.

\bibitem[Sun et~al.(2023)Sun, Suresh, Ro, Beirami, Jain, and Yu]{sun2023spectr}
Sun, Z., Suresh, A.~T., Ro, J.~H., Beirami, A., Jain, H., and Yu, F.
\newblock Spec{T}r: Fast speculative decoding via optimal transport.
\newblock In \emph{Advances in Neural Information Processing Systems (NeurIPS)}, 2023.

\bibitem[Touvron et~al.(2023)Touvron, Martin, Stone, Albert, Almahairi, Babaei, Bashlykov, Batra, Bhargava, Bhosale, et~al.]{touvron2023llama}
Touvron, H., Martin, L., Stone, K., Albert, P., Almahairi, A., Babaei, Y., Bashlykov, N., Batra, S., Bhargava, P., Bhosale, S., et~al.
\newblock Llama 2: Open foundation and fine-tuned chat models.
\newblock \emph{arXiv preprint arXiv:2307.09288}, 2023.

\bibitem[Vaswani et~al.(2017)Vaswani, Shazeer, Parmar, Uszkoreit, Jones, Gomez, Kaiser, and Polosukhin]{vaswani2017attention}
Vaswani, A., Shazeer, N., Parmar, N., Uszkoreit, J., Jones, L., Gomez, A.~N., Kaiser, {\L}., and Polosukhin, I.
\newblock Attention is all you need.
\newblock \emph{Advances in Neural Information Processing Systems (NeurIPS)}, 30, 2017.

\bibitem[Vieira(2014)]{vieira2014gumbel}
Vieira, T.
\newblock Gumbel-max trick and weighted reservoir sampling.
\newblock 2014.
\newblock URL \url{https://timvieira.github.io/blog/post/2014/08/01/gumbel-max-trick-andweighted-reservoir-sampling/}.

\bibitem[Villani et~al.(2009)]{villani2009optimal}
Villani, C. et~al.
\newblock \emph{Optimal transport: old and new}, volume 338.
\newblock Springer, 2009.

\bibitem[Xia et~al.(2024)Xia, Yang, Dong, Wang, Li, Ge, Liu, Li, and Sui]{xia2024unlocking}
Xia, H., Yang, Z., Dong, Q., Wang, P., Li, Y., Ge, T., Liu, T., Li, W., and Sui, Z.
\newblock Unlocking efficiency in large language model inference: A comprehensive survey of speculative decoding.
\newblock \emph{arXiv preprint arXiv:2401.07851}, 2024.

\bibitem[Xiao et~al.(2023)Xiao, Wu, Guo, Li, Zhang, Qin, and Liu]{xiao2023survey}
Xiao, Y., Wu, L., Guo, J., Li, J., Zhang, M., Qin, T., and Liu, T.-y.
\newblock A survey on non-autoregressive generation for neural machine translation and beyond.
\newblock \emph{IEEE Transactions on Pattern Analysis and Machine Intelligence}, 2023.

\bibitem[Zhang et~al.(2022)Zhang, Roller, Goyal, Artetxe, Chen, Chen, Dewan, Diab, Li, Lin, et~al.]{zhang2022opt}
Zhang, S., Roller, S., Goyal, N., Artetxe, M., Chen, M., Chen, S., Dewan, C., Diab, M., Li, X., Lin, X.~V., et~al.
\newblock {OPT}: Open pre-trained transformer language models.
\newblock \emph{arXiv preprint arXiv:2205.01068}, 2022.

\bibitem[Zhou et~al.(2023)Zhou, Lyu, Rawat, Menon, Rostamizadeh, Kumar, Kagy, and Agarwal]{zhou2023distillspec}
Zhou, Y., Lyu, K., Rawat, A.~S., Menon, A.~K., Rostamizadeh, A., Kumar, S., Kagy, J.-F., and Agarwal, R.
\newblock Distillspec: {I}mproving speculative decoding via knowledge distillation.
\newblock \emph{arXiv preprint arXiv:2310.08461}, 2023.

\end{thebibliography}
\bibliographystyle{icml2024}

\clearpage
\appendix

\onecolumn
\section{Theorems and proofs}
\subsection{Proof of \textbf{Theorem~\ref{thm:1}}}
\label{appendix:A:1}
\begin{manualtheorem}{3.1}[Recursive rejection sampling recovers target distribution] The random variable $Z\in\mathcal{X}$ defining recursive rejection sampling rule \eqref{eq:recursive_rejection_sampling} follows the target distribution $q$, i.e., 
\begin{align*}
    \prob\left\{Z=z\right\}=q(z), z\in\mathcal{X}.
\end{align*}
\end{manualtheorem}
\begin{proof}

We remain a sketch of the proof here and the formal proof is given in the next paragraph. We first consider the case where $\hat{X}^{(1)}, ..., \hat{X}^{(K-1)}$ are rejected and see whether we accept $\hat{X}^{(K)}$ or not; we either accept $\hat{X}^{(K)}$ with probability $\Theta^{(K)}$ in \eqref{eq:threshold} or sample a new token $Y\sim q^{(K+1)}(\cdot|\hX^{(1:K-1)})$ when all draft tokens are rejected. Since $q^{(K+1)}$ is the residual distribution from $q^{(K)}$, one can regard it as the simple sampling by \citet{chen2023accelerating} and \citet{leviathan2023fast}, which recovers $q^{(K)}$. The same idea is applied to $\hX^{(K-1)}, ..., \hX^{(1)}$ in the reversed order until we recover $q=q^{(1)}$ at the end. 

Let us desribe the formal proof. From the definition of recursive rejection sampling \eqref{eq:recursive_rejection_sampling}, we have
\begin{align}
    \prob\left\{
        Z=z
    \right\}=
    \sum_{k=1}^K
    \underbrace{
        \prob\left\{
            A^{(1:k-1)}=\reject^{k-1}, \hX^{(k)}=z, A^{(k)}=\accept
        \right\}
    }_{
        =:\Sigma_{1,k}
    }   
    +
    \underbrace{
        \prob\left\{A^{(1:K)}=\reject^{K}, \tilde{X}^{(K+1)}=z\right\}
    }_{
        =:\Sigma_{2,K}
    }.
    \nonumber
\end{align}
It can be shown that the following equality holds for each $k$:
\begin{align}
    \Sigma_{2,k-1}=\Sigma_{1,k}+\Sigma_{2,k}.
\end{align}
Let us first consider $k=K$, then, 
\begin{align*}
    &\Sigma_{1,K}+\Sigma_{2,K}\\
    &=\sum_{x^{(1)}, ..., x^{(K-1)}}
    \prob\left\{\hX^{(1:K-1)}=x^{(1:K-1)}\right\}
    \times
    \prob\left\{
        A^{(1:K-1)}=\reject^{K-1}, \hX^{(K)}=z, A^{(K)}=\accept\bigg|\hX^{(1:K-1)}=x^{(1:K-1)}
    \right\}\\
    &+
    \sum_{x^{(1)}, ..., x^{(K)}}
    \prob\left\{\hX^{(1:K)}=x^{(1:K)}\right\}
    \times
    \prob\left\{
        A^{(1:K)}=\reject^{K}, \hX^{(K+1)}=z\bigg|\hX^{(1:K)}=x^{(1:K)}
    \right\}\\
    &=\sum_{x^{(1)}, ..., x^{(K-1)}}
    \prob\left\{
        \hX^{(1:K-1)}=x^{(1:K-1)}
    \right\}
    \Bigg(
    \underbrace{
        \prob\left\{
            A^{(1:K-1)}=\reject^{K-1}, \hX^{(K)}=z, A^{(K)}=\accept\bigg|\hX^{(1:K-1)}=x^{(1:K-1)}
        \right\}
    }_{=:T_1(K)}
    \\
    &
    ~~~~~~~~~~~~~~~~~~~~~~~~~
    +
    \underbrace{
        \sum_{x^{(K)}}
        \prob\left\{\hX^{(K)}=x^{(K)}\bigg|\hX^{(1:K-1)}=x^{(1:K-1)}\right\}
        \times
        \prob\left\{
            A^{(1:K)}=\reject^{K}, \hX^{(K+1)}=z\bigg|\hX^{(1:K)}=x^{(1:K)}
        \right\}
    }_{=:T_2(K)}
    \Bigg).
\end{align*}
One can represent $T_{1,K}$ and $T_{2,K}$ as follows:
\begin{align*}
    T_{1,K}
    &=
    \prob\left\{
        A^{(1:K-1)}=\reject^{K-1}\bigg|\hX^{(1:K-1)}=x^{(1:K-1)}
    \right\}\\
    &~~~~~\times
    \prob\left\{
        \hX^{(K)}=z\bigg|\hX^{(1:K-1)}=x^{(1:K-1)}
    \right\}
    \times
    \prob\left\{
        A^{(K)}=\accept\Bigg|\hX^{(1:K)}=(x^{(1:K-1)}, z)
    \right\}\\
    &=    
    \prob\left\{
        A^{(1:K-1)}=\reject^{K-1}\bigg|\hX^{(1:K-1)}=x^{(1:K-1)}
    \right\}
    p^{(K)}(z|x^{(1:K-1)})
    \min\left\{
        1, \frac{q^{(K)}(z|x^{(1:K-2)})}{p^{(K)}(z|x^{(1:K-1)})}
    \right\}\\
    &=
    \prob\left\{
        A^{(1:K-1)}=\reject^{K-1}\bigg|\hX^{(1:K-1)}=x^{(1:K-1)}
    \right\}
    \min\left\{
        p^{(K)}(z|x^{(1:K-1)}), q^{(K)}(z|x^{(1:K-2)})
    \right\},
\end{align*}
\begin{align*}
    T_{2,K}
    &=
    \sum_{x^{(K)}}
    \prob\left\{\hX^{(K)}=x^{(K)}\bigg|\hX^{(1:K-1)}=x^{(1:K-1)}\right\}
    \times
    \prob\left\{
        A^{(1:K)}=\reject^{K}, \hX^{(K+1)}=z\bigg|\hX^{(1:K)}=x^{(1:K)}
    \right\}\\
    &=
    \sum_{x^{(K)}}
    p^{(K)}(x^{(K)}|x^{(1:K-1)})
    \times
    \prob\left\{
        A^{(1:K)}=\reject^{K}\bigg|\hX^{(1:K)}=x^{(1:K)}
    \right\}
    \times
    \prob\left\{
        \hX^{(K+1)}=z\bigg|\hX^{(1:K)}=x^{(1:K)}
    \right\}\\
    &=
    \sum_{x^{(K)}}
    p^{(K)}(x^{(K)}|x^{(1:K-1)})
    \times
    \prob\left\{
        A^{(1:K)}=\reject^{K}\bigg|\hX^{(1:K)}=x^{(1:K)}
    \right\}
    \times
    q^{(K+1)}(z|x^{(1:K-1)})\\
    &=
    \sum_{x^{(K)}}
    p^{(K)}(x^{(K)}|x^{(1:K-1)})
    \times
    \prob\left\{
        A^{(1:K-1)}=\reject^{K-1}\bigg|\hX^{(1:K-1)}=x^{(1:K-1)}
    \right\}\\
    &~~~~~~~~~~\times
    \prob\left\{
        A^{(K)}=\reject\bigg|\hX^{(1:K)}=x^{(1:K)}
    \right\}
    \times
    q^{(K+1)}(z|x^{(1:K-1)})\\
    &=
    \prob\left\{
        A^{(1:K-1)}=\reject^{K-1}\bigg|\hX^{(1:K-1)}=x^{(1:K-1)}
    \right\}
    q^{(K+1)}(z|x^{(1:K-1)})\\
    &~~~~~~~~~~\times
    \sum_{x^{(K)}}
    p^{(K)}(x^{(K)}|x^{(1:K-1)})
    \times
    \prob\left\{
        A^{(K)}=\reject\bigg|\hX^{(1:K)}=x^{(1:K)}
    \right\}
    \\
    &=
    \prob\left\{
        A^{(1:K-1)}=\reject^{K-1}\bigg|\hX^{(1:K-1)}=x^{(1:K-1)}
    \right\}
    q^{(K+1)}(z|x^{(1:K-1)})\\
    &~~~~~~~~~~\times
    \sum_{x^{(K)}}
    p^{(K)}(x^{(K)}|x^{(1:K-1)})
    \times
    \left(
        1
        -
        \prob\left\{
            A^{(K)}=\accept\bigg|\hX^{(1:K)}=x^{(1:K)}
        \right\}
    \right)
    \\
    &=
    \prob\left\{
        A^{(1:K-1)}=\reject^{K-1}\bigg|\hX^{(1:K-1)}=x^{(1:K-1)}
    \right\}
    q^{(K+1)}(z|x^{(1:K-1)})\\
    &~~~~~~~~~~\times
    \sum_{x^{(K)}}
    p^{(K)}(x^{(K)}|x^{(1:K-1)})
    \times
    \left(
        1
        -
        \min\left\{
            1, \frac{q^{(K)}(x^{(K)}|x^{(1:K-2)})}{p^{(K)}(x^{(K)}|x^{(1:K-1)})}
        \right\}
    \right)
    \\
    &=
    \prob\left\{
        A^{(1:K-1)}=\reject^{K-1}\bigg|\hX^{(1:K-1)}=x^{(1:K-1)}
    \right\}
    \frac{
        \max\left\{0, q^{(K)}(z|x^{(1:K-2)})-p^{(K)}(z|x^{(1:K-1)})\right\}
    }
    {
        \sum_{x^{(K)}}\max\left\{0, q^{(K)}(x^{(k)}|x^{(1:K-2)})-p^{(K)}(x^{(K)}|x^{(1:K-1)})\right\}
    }
    \\
    &~~~~~~~~~~\times
    \sum_{x^{(K)}}
    \max\left\{
        0, 
        q^{(K)}(x^{(K)}|x^{(1:K-2)})-p^{(K)}(x^{(K)}|x^{(1:K-1)})
    \right\}
    \\
    &=
    \prob\left\{
        A^{(1:K-1)}=\reject^{K-1}\bigg|\hX^{(1:K-1)}=x^{(1:K-1)}
    \right\}
    \max\left\{0, q^{(K)}(z|x^{(1:K-2)})-p^{(K)}(z|x^{(1:K-1)})\right\}.
\end{align*}
Therefore, we have
\begin{align*}
    &T_{1,K}+T_{2,K}\\
    &=
    \prob\left\{
        A^{(1:K-1)}=\reject^{K-1}\bigg|\hX^{(1:K-1)}=x^{(1:K-1)}
    \right\}\\
    &~~~~~~~~~~\times
    \left(
        \min\left\{
        p^{(K)}(z|x^{(1:K-1)}), q^{(K)}(z|x^{(1:K-2)})
        \right\}
        +
        \max\left\{0, q^{(K)}(z|x^{(1:K-2)})-p^{(K)}(z|x^{(1:K-1)})\right\}
    \right)
    \\
    &=
    \prob\left\{
        A^{(1:K-1)}=\reject^{K-1}\bigg|\hX^{(1:K-1)}=x^{(1:K-1)}
    \right\}q^{(K)}(z|x^{(1:K-2)})
    \\
    &=
    \prob\left\{
        A^{(1:K-1)}=\reject^{K-1}, \tilde{X}^{(K)}=z\bigg|\hX^{(1:K-1)}=x^{(1:K-1)}
    \right\},
\end{align*}
where we define a random variable $\tilde{X}^{(K)}$ such that 
\begin{align*}
    \prob
    \left\{
        \tilde{X}^{(K)}=z\bigg|\hX^{(1:K-1)}=x^{(1:K-1)}
    \right\}
    :=q^{(K)}(z|x^{(1:K-1)}),
\end{align*}
which leads to
\begin{align*}
    &\Sigma_{1,K}+\Sigma_{2,K}\\
    &=\sum_{x^{(1)}, ..., x^{(K-1)}}
    \prob\left\{
        \hX^{(1:K-1)}=x^{(1:K-1)}
    \right\}
    (T_{1,K}+T_{2,K})\\
    &=\sum_{x^{(1)}, ..., x^{(K-1)}}
    \prob\left\{
        \hX^{(1:K-1)}=x^{(1:K-1)}
    \right\}
    \times
    \prob\left\{
        A^{(1:K-1)}=\reject^{K-1}, \tilde{X}^{(K)}=z\bigg|\hX^{(1:K-1)}=x^{(1:K-1)}
    \right\}\\
    &=
    \prob\left\{
        A^{(1:K-1)}=\reject^{K-1}, \tilde{X}^{(K)}=z
    \right\}\\
    &=\Sigma_{2,K-1}.
\end{align*}
Since the same derivation can be done for $k=2, ..., K-1$, we have
\begin{align*}
    \prob\left\{Z=z\right\}
    =
    \sum_{k=1}^K\Sigma_{1,k}+\Sigma_{2,K}
    =
    \sum_{k=1}^{K-1}\Sigma_{1,k}+\Sigma_{2,K-1}
    =
    \cdots
    =
    \Sigma_{1,1}+\Sigma_{2,1}=q(z),
\end{align*}
where the last equality holds from the derivation of original speculative decoding by~\cite{chen2023accelerating,leviathan2023fast}.
\end{proof}

\subsection{Proof of \textbf{Theorem~\ref{thm:2}}}
\label{appendix:A:2}
\begin{manualtheorem}{3.2}[Tokens from the same sequence follow sampling without replacement in RSD-S]
\label{thm:2}
In RSD-S, any non-empty subsequence of the sequence $\hat{X}_{l+1}^{(1)}, ..., \hat{X}_{l+1}^{(W)}$ of draft tokens (from 
$O_{l+1}^{(1)}, ..., O_{l+1}^{(W)}$ in \eqref{eq:argtop_pair}) such that each element of the subsequence has the same parent $\btau_l^{(k)}$ follows sampling without replacement from $p(\cdot|\btau_l^{(k)})$. 
\end{manualtheorem}
\begin{proof}
    For fixed $\btau_l^{(k)}$, consider a sequence of tokens
\begin{align*}
    \bar{\mathbf{X}}_{l+1}^{(k)}:=\underset{x\in\mathcal{X}}{\mathrm{argsort}}~
    \psi_l(\btau_l^{(k)},x)
    =\underset{x\in\mathcal{X}}{\mathrm{argsort}}~
    \tilde{\phi}_l(\btau_l^{(k)},x),
\end{align*}
where the last equality holds since $T$ in \eqref{eq:transformation} is monotonically increasing w.r.t. $\tilde{\phi}_l(\btau_l^{(k)},\cdot)$ for fixed $\btau_l^{(k)}$. Thus, $\bar{\mathbf{X}}_{l+1}^{(k)}$ can be seen as samples from $p(\cdot|\btau_l^{(k)})$ without replacement.

For a length-$l_k$ subsequence $\mathbf{o}_{l+1}^{(k)}$ of $(O_{l+1}^{(1)}, ..., O_{l+1}^{(W)})$ in \eqref{eq:argtop_pair}, where each element of the subsequence have $\btau_l^{(k)}$ as its parent, the token sequence in 
$\mathbf{o}_{l+1}^{(k)}$ is a subsequence of $\bar{\mathbf{X}}_{l+1}^{(k)}$, i.e., those tokens are top-$l_k$ samples without replacement from $p(\cdot|\btau_l^{(k)})$.
\end{proof}

\clearpage
\section{Algorithm}
\label{appendix:sec:algorithm}
\subsection{Recursive Speculative Decoding with Constant Branching Factors (RSD-C)}
\label{appendix:sec:rsd_c}

\begin{algorithm}[h]
   \caption{Recursive Speculative Decoding with Constant Branching Factors (RSD-C)}
   \label{alg:rsd_c}
\begin{algorithmic}[1]
   \STATE {\bfseries Input:}
   The length $\draftlen$ of draft sequences (depth of the draft tree),
   a sequence $\inputseq$ of input tokens, 
   a list $\branchingfactor:=[b_0, ..., b_{\draftlen-1}]$ of constant branching factors in the draft tree,
   the maximum length $\outputlen$ of the output sequence.
   \STATE
   \Comment{// Get the length of the input sequence.} \\
   $\inputlen\subs\getlength(\inputseq)$.
   \STATE
   \Comment{// Initialize empty KV caches for draft and target models.} \\
   $\kvcachedraft\leftarrow\emptyset$, $\kvcachetarget\leftarrow\emptyset$.
   \WHILE {$\inputlen < \outputlen$}
   \STATE
   \Comment{// (STEP 1) Create a draft tree by using the draft model.} \\
   $\drafttree, \inputseq, \kvcachedraft, \attentionmask, \positionids, \listnumnodes \subs \createdrafttreeconst(\inputseq, \kvcachedraft, \branchingfactor, \draftlen).$ 
   
   \STATE
   \Comment{// (STEP 2) Evaluate draft tokens by using the target model.} \\
   \Comment{// - Apply $\attentionmask$ to the right below corner of attention weights.} \\
   \Comment{// - The target log probability $\logprobstarget$ is a $\mathtt{GetLength}(\inputseq)\times \vocabsize$ tensor.} \\
   \Comment{// - $\vocabsize$ is the vocabulary size.} \\
   $\logprobstarget, \kvcachetarget \subs \targetmodelforwardpass(\inputseq, \kvcachetarget, \positionids, \attentionmask)$.
   \STATE
   \Comment{// - Convert the log probability tensor into the list of log probabilities for each level of the tree.} \\
   $\listlogprobstarget \subs \splittensor(\logprobstarget[-\mathtt{Sum}(\listnumnodes):, :], \listnumnodes, \dimension=0)$
   \STATE 
   \Comment{// (STEP 3) Run Recursive Rejection Sampling for each level of the tree.} \\
   $\acceptedtokens, \lasttoken, \flatnodeidsaccepted \subs \recursiverejectionsampling(\drafttree, \listlogprobstarget)$

   \STATE
   \Comment{// (STEP 4) Use KV caches that are accepted, and prepare for the next round.} \\
   $\kvcachedraft, \kvcachetarget \subs \filterkvcache(\kvcachedraft, \kvcachetarget, \inputlen, \flatnodeidsaccepted)$

   \STATE
   $\inputseq \subs \concatenate([\inputseq[:\inputlen], \acceptedtokens, \lasttoken])$
      
   \STATE
   $\inputlen \subs \getlength(\inputseq)$
 
   \ENDWHILE
   \STATE {\bfseries Output:} a sequence $\inputseq$ that includes both input tokens and generated output tokens.
\end{algorithmic}
\end{algorithm}

\begin{algorithm}[h]
   \caption{$\createdrafttreeconst(\inputseq, \kvcachedraft, \branchingfactor, \draftlen)$}
   \label{alg:createdrafttreeconst}
\begin{algorithmic}[1]

   \STATE {\bfseries Input:} 
   An input sequence $\inputseq$, 
   the draft KV cache $\kvcachedraft$, 
   the branching factor $\branchingfactor:=[b_0, ..., b_{\draftlen-1}]$, 
   the draft length $\draftlen$
   \STATE
   \Comment{// Get the length of the input sequence.} \\
   $\inputlen\subs\getlength(\inputseq)$.
   \STATE
   \Comment{// Initialize lists for 1) draft log probabilities, 2) flattened node IDs, 3) parent node ids (within each level of the draft tree), 4) draft tokens, 5) numbers of nodes (for all levels of the tree), respectively.} \\
   $\listlogprobsdraft \subs [~], \listflatnodeids \subs [~], \listparentids \subs [~], \listdrafttokens \subs [~], \listnumnodes \subs [~]$.
   \STATE 
   \Comment{// Initialize a draft tree.} \\
   $
   \drafttree \subs (\listlogprobsdraft, \listflatnodeids, \listparentids, \listdrafttokens)
   $.
   \STATE 
   \Comment{// Set an empty attention mask, and position ids; inclusive for $\astart$ and exclusive for $\aend$.} \\
   $\attentionmask\subs\emptyset$, $\positionids\subs\arange(\astart=0, \aend=\inputlen)$.
   \STATE 
   \Comment{// Set the counter to check the number of nodes in the tree.} \\
   $\numnodeprev\subs0$, $\numnodecurr\subs0$. 
   \STATE 
   \Comment{// Set the number of nodes at the current level of the tree.} \\
   $\numnodes\subs1$, $\listnumnodes\append(\numnodes)$. 
   \FOR{$\currdraftlen=0$ {\bfseries to} $\draftlen - 1$}
   \STATE
   \Comment{// Apply $\attentionmask$ to the right below corner of attention weights.} \\
   \Comment{// The draft log probability $\logprobsdraft$ is a $\mathtt{GetLength}(\inputseq)\times \vocabsize$ tensor.} \\
   \Comment{// $\vocabsize$ is the vocabulary size.} \\
   $\logprobsdraft, \kvcachedraft \subs \draftmodelforwardpass(\inputseq, \kvcachedraft, \positionids, \attentionmask)$.
   
   \STATE
   \Comment{// Sample $b_{\currdraftlen}$ nodes without replacement, independently for $\numnodes$ nodes.}\\
   \Comment{// NOTE: Outputs are sorted w.r.t. the value of perturbed log probabilities and flattened.}\\
   $\drafttokens, \parentids\subs\samplewithgumbeltopk(\logprobsdraft[-\numnodes:, :], b_{\currdraftlen})$.
   \STATE
   \Comment{// Update the input sequence of tokens.}\\
   $\inputseq \subs \concatenate ([\inputseq, \drafttokens])$.
   \STATE
   \Comment{// Get the number of newly added nodes.}\\
   $\numnodes \subs \getlength(\drafttokens)$.
   \STATE 
   \Comment{// Build attention mask reflecting tree topology.}\\
   $\attentionmask \subs \buildattentionmask(\attentionmask, \parentids, \numnodes, \numnodeprev, \numnodecurr)$.
   \STATE
   \Comment{// Update counters.}\\
   $\numnodeprev \subs \numnodecurr$, 
   $\numnodecurr \subs \numnodecurr + \numnodes$.
   
   \STATE
   \Comment{// Update position IDs. }\\
   $\positionids \subs \concatenate([\positionids, (\inputlen + \currdraftlen)\times\mathbf{1}_{\numnodes}])$.
   \STATE
   \Comment{// Get node IDs considering the flattened draft tree.}\\
   \Comment{// This is used to update KV caches.}\\
   $\flatnodeids \subs \arange(\astart=\inputlen+\numnodeprev,\aend=\inputlen+\numnodecurr)$.
   \STATE
   \Comment{// Update the lists of the tree.}\\
   $\listlogprobsdraft\append(\logprobsdraft)$,
   $\listflatnodeids\append(\flatnodeids)$,
   $\listparentids\append(\parentids)$,\\
   $\listdrafttokens\append(\drafttokens)$,
   $\listnumnodes\append(\numnodes)$.
   \ENDFOR
   \STATE {\bfseries Output:} 
   $\drafttree, \inputseq, \kvcachedraft, \attentionmask, \positionids, \listnumnodes$.
    
\end{algorithmic}
\end{algorithm}

\begin{algorithm}[h]
   \caption{$\samplewithgumbeltopk(\logprobs, K)$}
   \label{alg:samplewithgumbeltopk}
\begin{algorithmic}[1]
    \STATE {\bfseries Input:} a $\numnodes \times \vocabsize$ log probabilities $\logprobs$, the number $K$ of desired samples without replacement.
    \STATE
    \Comment{// Sample a matrix where elements are i.i.d. standard Gumbel random variables.}\\
    $\gumbel \subs [g_{ij}]$, $g_{ij}\subs\mathtt{SampleStandardGumbel}(), i=0, ..., \numnodes-1, j=0, ..., \vocabsize-1$.
    \STATE
    \Comment{// Perturb log probabilities with Gumbel random variables.}\\
    $\perturbedlogprobs \subs \logprobs + \gumbel$.
    \STATE
    \Comment{// Get top-$K$ elements corresponding to the $K$ largest perturb log probabilities.}\\
    \Comment{// Outputs are sorted (in each row) w.r.t. the values of perturbed log probabilities and flattened.}\\
    $\tokens \subs \mathrm{argtop}^{(K)}(\perturbedlogprobs, \dimension=-1)\flatten()$.
    \STATE
    \Comment{// Set parent ids.}\\
    $\parentids \subs \concatenate([0\cdot\mathbf{1}_{K}, 1\cdot\mathbf{1}_{K}, ..., (\numnodes - 1)\cdot\mathbf{1}_{K}])$.
    \STATE
    \Comment{// When probability filtering methods(e.g., top-$p$, top-$k$) were applied, filter some elements of $\tokens$ and $\parentids$ if corresponding log probability is equal to $-\infty$.}
    \STATE {\bfseries Output}: $\tokens, \parentids$.
\end{algorithmic}
\end{algorithm}

\begin{algorithm}[h]
   \caption{$\buildattentionmask(\attentionmask, \parentids, \numnodes, \numnodeprev, \numnodecurr)$}
   \label{alg:buildattentionmask}
\begin{algorithmic}[1]
    \STATE {\bfseries Input:} previous attention mask $\attentionmask$, parent node ids $\parentids$ for newly added nodes, the number $\numnodes$ of nodes newly added to the tree, the total number $\numnodeprev$ of nodes in the previous-iteration tree, the total number $\numnodecurr$ of nodes in the current-iteration tree. 
    \IF{$\attentionmask == \emptyset$}
        \STATE 
        \Comment{// If the attention mask is empty, we initialize with zeros.} \\
        $\attentionmask \subs \mathbf{0}_{\numnodes\times\numnodes}$.
    \ELSE
        \STATE
        \Comment{// If the attention mask exists, we zero-pad.} \\
        $\attentionmask \subs \zeropadding(\attentionmask, \mathtt{right}=\numnodes, \mathtt{bottom}=\numnodes)$.
        \FOR{$i=0$ {\bfseries to} $\numnodes-1$}
            \STATE 
            \Comment{// Copy the row about paraent nodes to the row about child nodes.} \\
            $\attentionmask[\numnodecurr + i, :] \subs \attentionmask[\numnodeprev + \parentids[i], :]$.
        \ENDFOR
    \ENDIF
    \STATE 
    \Comment{// Set diagonal elements equal to 1.} \\
    $\attentionmask \subs \attentionmask\filldiagonal(1)$
    
    \STATE {\bfseries Output:} the new attention mask $\attentionmask$. 
\end{algorithmic}
\end{algorithm}

\begin{algorithm}[h]
   \caption{$\recursiverejectionsampling(\drafttree, \listlogprobstarget)$}
   \label{alg:recursiverejectionsampling}
\begin{algorithmic}[1]
    \STATE {\bfseries Input:} the draft tree $\drafttree$, the list $\listlogprobstarget$ of target log probabilities 
    \STATE
    \Comment{// Get lists from the draft tree.} \\
    $\listlogprobsdraft, \listflatnodeids, \listparentids, \listdrafttokens \subs \drafttree$
    \STATE
    \Comment{// Set the current node id.} \\
    $\nodeid \subs 0$
    \STATE 
    \Comment{// Initialize the lists to store accepted draft tokens and flattened node ids (for KV cache update).} \\
    $\listdrafttokensaccepted \subs [~], \listflatnodeidsaccepted \subs [~]$.
    \FOR{$\currdraftlen=0$ {\bfseries to} $\draftlen-1$}
        \STATE 
        \Comment{// Get log probabilities at the current node.} \\
        \Comment{// Both are $1\times \vocabsize$ tensors, where $\vocabsize$ is the vocabulary size.} \\
        $\logprobsdraft \subs \listlogprobsdraft[\currdraftlen][\nodeid:(\nodeid+1), :]$, $\logprobstarget \subs \listlogprobstarget[\currdraftlen][\nodeid:(\nodeid+1), :]$
        \STATE 
        \Comment{// Get draft tokens, flattened node IDs, parent IDs at the current level.} \\
        $\drafttokens \subs \listdrafttokens[\currdraftlen]$, $\flatnodeids \subs \listflatnodeids[\currdraftlen]$, $\parentids \subs \listparentids[\currdraftlen]$
        \STATE 
        \Comment{// Initialize an acceptance indicator as False.} \\
        $\tokenisaccepted \subs \isfalse$

        \FOR{$i$ {\bfseries in} $\parentids$}
            \IF{$i\ne \nodeid$}
                \CONTINUE
            \ENDIF
            \STATE 
            \Comment{// Get the current draft token.} \\
            $x_d \subs \drafttokens[i]$.
            \STATE 
            \Comment{// Sample a uniform random variable.} \\
            $U \sim \mathrm{Uniform}[0, 1]$.
            \IF{$U < \min\{1, \exp(\logprobstarget[0, x_d] - \logprobsdraft[0, x_d])\}$}
                \STATE 
                \Comment{// Set the indicator as True is the token is accepted.} \\
                $\tokenisaccepted \subs \istrue$.
                \STATE 
                \Comment{// Store the accepted token and corresponding flattened node ID.} \\
                $\listdrafttokensaccepted\append(x_d)$, $\listdrafttokensaccepted\append(\flatnodeids[i])$.
                \STATE
                $\nodeid \subs i$.
                \BREAK
            \ENDIF

            \STATE 
            \Comment{// Get clamped target log probability. } \\
            $\logprobstarget \subs \log((\exp(\logprobstarget) - \exp(\logprobsdraft))\clamp(\mathtt{min}=0)\})$
            \STATE
            \Comment{// Normalize the clamped target log probability.} \\
            $\logprobstarget \subs \logprobstarget - \logsumexp(\logprobstarget)$
            \STATE
            \Comment{// Neglect draft log probability of already sampled token.} \\ 
            $\logprobsdraft[i] \subs - \infty$
            \STATE
            \Comment{// Normalize the draft log probability.} \\
            $\logprobsdraft \subs \logprobsdraft - \logsumexp(\logprobsdraft)$
            
        \ENDFOR

        \IF{$\tokenisaccepted==\isfalse$}
            \BREAK
        \ENDIF
    \ENDFOR
    \IF{$\tokenisaccepted$}
        \STATE
        \Comment{// At the leaf node when all tokens are accepted, we use target log probability to draw a sample.} \\
        $\logprobstarget \subs \listlogprobstarget[l_d][\nodeid:(\nodeid+1), :]$
    \ENDIF
    \STATE
        $\lasttoken\sim\samplewithgumbeltopk(\logprobstarget, 1)$
    \STATE
    $\acceptedtokens \subs \stack(\listdrafttokensaccepted)$,
    $\flatnodeidsaccepted \subs \stack(\listdrafttokensaccepted)$.
    \STATE {\bfseries Output: $\acceptedtokens, \lasttoken, \flatnodeidsaccepted$} 
\end{algorithmic}
\end{algorithm}

\clearpage 
\subsection{Recursive Speculative Decoding with Stochastic Beam Search (RSD-S)}
\label{appendix:sec:rsd_s}

We highlight the difference w.r.t. RSD-C.

\begin{algorithm}[h]
   \caption{Recursive Speculative Decoding with Stochastic Beam Search (RSD-S)}
   \label{alg:recursive_speculative_decoding_with_stochastic_beam_search}
\begin{algorithmic}[1]
   \STATE {\bfseries Input:}
   The length $\draftlen$ of draft sequences (depth of the draft tree),
   a sequence $\inputseq$ of input tokens, 
   \Red{the beamwidth $W$,}
   the maximum length $\outputlen$ of the output sequence.
   \STATE
   \Comment{// Get the length of the input sequence.} \\
   $\inputlen\subs\getlength(\inputseq)$.
   \STATE
   \Comment{// Initialize empty KV caches for draft and target models.} \\
   $\kvcachedraft\leftarrow\emptyset$, $\kvcachetarget\leftarrow\emptyset$.
   \WHILE {$\inputlen < \outputlen$}
   \STATE
   \Comment{// (STEP 1) Create a draft tree by using the draft model.} \\
   $\drafttree, \inputseq, \kvcachedraft, \attentionmask, \positionids, \listnumnodes$ \\
   $\subs {\Red{\createdrafttreesbs}}(\inputseq, \kvcachedraft, \Red{W}, \draftlen).$ 
   
   \STATE
   \Comment{// (STEP 2) Evaluate draft tokens by using the target model.} \\
   \Comment{// - Apply $\attentionmask$ to the right below corner of attention weights.} \\
   \Comment{// - The target log probability $\logprobstarget$ is a $\mathtt{GetLength}(\inputseq)\times \vocabsize$ tensor.} \\
   \Comment{// - $\vocabsize$ is the vocabulary size.} \\
   $\logprobstarget, \kvcachetarget \subs \targetmodelforwardpass(\inputseq, \kvcachetarget, \positionids, \attentionmask)$.
   \STATE
   \Comment{// - Convert the log probability tensor into the list of log probabilities for each level of the tree.} \\
   $\listlogprobstarget \subs \splittensor(\logprobstarget[-\mathtt{Sum}(\listnumnodes):, :], \listnumnodes, \dimension=0)$
   \STATE 
   \Comment{// (STEP 3) Run Recursive Rejection Sampling for each level of the tree.} \\
   $\acceptedtokens, \lasttoken, \flatnodeidsaccepted \subs \recursiverejectionsampling(\drafttree, \listlogprobstarget)$

   \STATE
   \Comment{// (STEP 4) Use KV caches that are accepted, and prepare for the next round.} \\
   $\kvcachedraft, \kvcachetarget \subs \filterkvcache(\kvcachedraft, \kvcachetarget, \inputlen, \flatnodeidsaccepted)$

   \STATE
   $\inputseq \subs \concatenate([\inputseq[:\inputlen], \acceptedtokens, \lasttoken])$
      
   \STATE
   $\inputlen \subs \getlength(\inputseq)$
 
   \ENDWHILE
   \STATE {\bfseries Output:} a sequence $\inputseq$ that includes both input tokens and generated output tokens.
\end{algorithmic}
\end{algorithm}

\begin{algorithm}[h]
   \caption{$\Red{\createdrafttreesbs}(\inputseq, \kvcachedraft, \Red{W}, \draftlen)$}
   \label{alg:create_draft_tree_stochastic_beam_search}
\begin{algorithmic}[1]

   \STATE {\bfseries Input:} 
   An input sequence $\inputseq$, 
   the draft KV cache $\kvcachedraft$, 
   \Red{the beamwidth $W$,}
   the draft length $\draftlen$
   \STATE
   \Comment{// Get the length of the input sequence.} \\
   $\inputlen\subs\getlength(\inputseq)$.
   \STATE
   \Comment{// Initialize lists for 1) draft log probabilities, 2) flattened node IDs, 3) parent node ids (within each level of the draft tree), 4) draft tokens, 5) numbers of nodes (for all levels of the tree), respectively.} \\
   $\listlogprobsdraft \subs [~], \listflatnodeids \subs [~], \listparentids \subs [~], \listdrafttokens \subs [~], \listnumnodes \subs [~]$.
   \STATE 
   \Comment{// Initialize a draft tree.} \\
   $
   \drafttree \subs (\listlogprobsdraft, \listflatnodeids, \listparentids, \listdrafttokens)
   $.
   \STATE 
   \Comment{// Set an empty attention mask, and position ids; inclusive for $\astart$ and exclusive for $\aend$.} \\
   $\attentionmask\subs\emptyset$, $\positionids\subs\arange(\astart=0, \aend=\inputlen)$.
   \STATE 
   \Comment{// Set the counter to check the number of nodes in the tree.} \\
   $\numnodeprev\subs0$, $\numnodecurr\subs0$. 
   \STATE 
   \Comment{// Set the number of nodes at the current level of the tree.} \\
   $\numnodes\subs1$, $\listnumnodes\append(\numnodes)$. 
   
   \STATE 
   \Comment{\Red{// Set stochastic beam parameters: sum log probabilities $\sumlogprob$ and truncated Gumbels $\truncatedgumbel$ for each node in the current level of draft tree}} \\
   \Red{$
   \sumlogprob \subs \textbf{0}_{\numnodes\times1}, 
   \truncatedgumbel \subs \textbf{0}_{\numnodes\times1}
   $.}
   
   \FOR{$\currdraftlen=0$ {\bfseries to} $\draftlen - 1$}
   \STATE
   \Comment{// Apply $\attentionmask$ to the right below corner of attention weights.} \\
   \Comment{// The draft log probability $\logprobsdraft$ is a $\mathtt{GetLength}(\inputseq)\times \vocabsize$ tensor.} \\
   \Comment{// $\vocabsize$ is the vocabulary size.} \\
   $\logprobsdraft, \kvcachedraft \subs \draftmodelforwardpass(\inputseq, \kvcachedraft, \positionids, \attentionmask)$.
   
   \STATE
   \Comment{\Red{// Sample $b_{\currdraftlen}$ nodes without replacement, independently for $\numnodes$ nodes.}}\\
   \Comment{\Red{// NOTE: Outputs are sorted w.r.t. the value of perturbed log probabilities and flattened.}}\\
   \Red{$\drafttokens, \parentids, \sumlogprob, \truncatedgumbel \subs \samplewithstochasticbeam(\logprobsdraft[-\numnodes:, :], \sumlogprob, \truncatedgumbel, W)$.}
   \STATE
   \Comment{// Update the input sequence of tokens.}\\
   $\inputseq \subs \concatenate ([\inputseq, \drafttokens])$.
   \STATE
   \Comment{// Get the number of newly added nodes.}\\
   $\numnodes \subs \getlength(\drafttokens)$.
   \STATE 
   \Comment{// Build attention mask reflecting tree topology.}\\
   $\attentionmask \subs \buildattentionmask(\attentionmask, \parentids, \numnodes, \numnodeprev, \numnodecurr)$.
   \STATE
   \Comment{// Update counters.}\\
   $\numnodeprev \subs \numnodecurr$, 
   $\numnodecurr \subs \numnodecurr + \numnodes$.

   \STATE
   \Comment{// Update position IDs. }\\
   $\positionids \subs \concatenate([\positionids, (\inputlen + \currdraftlen)\times\mathbf{1}_{\numnodes}])$.
   \STATE
   \Comment{// Get node IDs considering the flattened draft tree.}\\
   \Comment{// This is used to update KV caches.}\\
   $\flatnodeids \subs \arange(\astart=\inputlen+\numnodeprev,\aend=\inputlen+\numnodecurr)$.
   \STATE
   \Comment{// Update the lists of the tree.}\\
   $\listlogprobsdraft\append(\logprobsdraft)$,
   $\listflatnodeids\append(\flatnodeids)$,
   $\listparentids\append(\parentids)$,\\
   $\listdrafttokens\append(\drafttokens)$,
   $\listnumnodes\append(\numnodes)$.
   \ENDFOR
   \STATE {\bfseries Output:} 
   $\drafttree, \inputseq, \kvcachedraft, \attentionmask, \positionids, \listnumnodes$.
    
\end{algorithmic}
\end{algorithm}

\begin{algorithm}[h]
   \caption{$\Red{\samplewithstochasticbeam(\logprobs, \sumlogprob, \truncatedgumbel, K)}$}
   \label{alg:sample_with_stochastic_beam}
\begin{algorithmic}[1]
    \STATE {\bfseries Input:} a $\numnodes \times \vocabsize$ log probabilities $\logprobs$, \Red{a $\numnodes \times 1$ sum log probabilities $\sumlogprob$, a $\numnodes \times 1$ truncated Gumbels $\truncatedgumbel$, the beamwidth $K$.}
    \STATE
    \Comment{\Red{// Get sum log probs up to child nodes.}}\\
    \Red{$\logprobs \subs \logprobs + \sumlogprob \mathbf{1}_{1\times \vocabsize}$.}
    \STATE
    \Comment{// Sample a matrix where elements are i.i.d. standard Gumbel random variables.}\\
    $\gumbel \subs [g_{ij}]$, $g_{ij}\subs\mathtt{SampleStandardGumbel}(), i=0, ..., \numnodes-1, j=0, ..., \vocabsize-1$.
    
    \STATE
    \Comment{// Perturb \Red{sum} log probabilities with Gumbel random variables.}\\
    $\perturbedlogprobs \subs \logprobs + \gumbel$.
    
    \STATE
    \Comment{\Red{// Compute row-wise maximum value of perturbed sum log probabilities.}}\\
    \Comment{\Red{// The output size is $\numnodes\times1$.}}\\
    \Red{$\maxperturbedlogprobs \subs \perturbedlogprobs.\mathtt{max}(\dimension=-1, \mathtt{keepdim}=\istrue)$.}
    
    \STATE
    \Comment{\Red{// Get truncated Gumbels for all expansion.}}\\
    \Comment{\Red{// The output size is $\numnodes\times \vocabsize$.}}\\
    \Comment{\Red{// NOTE: the numerical stable way of computing this quantity was described in the original Stochastic Beam Search paper.}}\\
    \Red{$\tilde{\truncatedgumbel} \subs - \log ( \exp ( - \truncatedgumbel \mathbf{1}_{1\times \vocabsize} ) - \exp ( - \maxperturbedlogprobs \mathbf{1}_{1\times \vocabsize}) + \exp ( - \perturbedlogprobs))   $}
    
    \STATE
    \Comment{// Get top-$K$ elements \Red{and} the $K$ largest \Red{truncated Gumbels.}}\\
    \Comment{\Red{// NOTE: we consider top-$K$ elements for all elements in $\tilde{\truncatedgumbel}$, so both parent node IDs and token IDs can be acquired. Make sure that both output IDs are sorted w.r.t. the corresponding values in $\tilde{\truncatedgumbel}$.}}\\
    $\parentids, \tokens, \truncatedgumbel \subs \mathrm{argtop}\text{-}K(\perturbedlogprobs)$.
    \STATE
    \Comment{\Red{// Get sum log probs for top-$K$ elements.}}\\
    \Red{$\sumlogprob \subs \logprobs[\parentids, \tokens]$.}
    \STATE
    \Comment{// When probability filtering methods(e.g., top-$p$, top-$k$) were applied, filter some elements of \Red{$\tokens, \parentids, \sumlogprob, \truncatedgumbel$} if corresponding log probability is equal to $-\infty$.}
    \STATE {\bfseries Output: $\tokens, \parentids, \sumlogprob, \truncatedgumbel$} 
\end{algorithmic}
\end{algorithm}

\clearpage
\section{Experiments}\label{appendix:C}

\subsection{Draft models}\label{sec:appendix:C:draft_models}

The following draft models are used:
\begin{itemize}
    \item For Llama 2 target models, we use the 115M \textbf{Llama 2 drafter} and \textbf{Llama 2-Chat drafter} for \textbf{Llama 2} and \textbf{Llama 2-Chat} target models, respectively. 
    \begin{itemize}
        \item \textbf{Llama 2 drafter} uses smaller Llama archiecture~\cite{touvron2023llama} and is pre-trained on the 600B-token dataset
        \item \textbf{Llama 2-Chat drafter} is the model fine-tuned from Llama 2-drafter so that it can be aligned with Llama 2-Chat-7B via distillation. 
    \end{itemize}
    \item For OPT target models, we use \textbf{OPT} with \textbf{125M} and \textbf{350M} parameters for target OPT models. 
    
\end{itemize}

\subsection{Performance Metrics}\label{sec:appendix:C:metrics}
In the experiments, we consider three metrics (except accuracy) for all target models. 
\begin{itemize}
    \item \textbf{Block efficiency}~\cite{leviathan2023fast} is the average number of tokens generated per target model call. Within a single target call, auto-regressive decoding always generates a single token, while speculative decoding methods generates 
    \begin{align*}
        \text{(Number of accepted tokens)} + 1.
    \end{align*}
    The block efficiency $\eta$ is the average over all target calls. 

    \item \textbf{Memory-Bound Speed Up (MBSU)} is the fictitious inference speed-up relative to auto-regressive decoding, where we assume each model's runtime is proportional to the model size. Specifically, 
    let $L$ denote the (maximum) length of draft sequences, which is the depth of the draft-token tree for tree-based speculative decoding methods, and $r$ denote the relative speed of running the draft model to that of the target model. The walltime improvement~\cite{leviathan2023fast,zhou2023distillspec} is
    \begin{align*}
        \frac{\eta}{L\times r + 1}.
    \end{align*}
    MBSU considers a specific case where $r$ is equal to $\text{(Size of the target model)}/\text{(Size of the draft model)}$, considering practical scenarios in memory-bound devices where loading model weights takes significant amount time, often proportional to their size. 

    \item \textbf{Token rate} is the measure of average number of generated tokens per second while running on A100 GPUs. It shows different results from MBSU since running A100 GPUs is far from memory-bound scenarios. 

\end{itemize}

\subsection{Tree Structure}
\subsubsection{Experiment for various lengths of draft sequence}
\label{sec:appendix:tree_structure:exp1}
The following trees are used for draft sequence length $L$, where SD uses a single draft sequence with length $L$. 
For each $L$, we first set RSD-C with constant branching factors always equal to 2 and set the draft-tree sizes for SpecTr and RSD-S \emph{always less than or equal to} the tree size of RSD-C.
Then, we add RSD-C with the branching factor $\mathbf{b}:=[n, 1, ..., 1]$ where $n$ is properly set to have the draft-tree size equal to that of SpecTr and RSD-S. 
In \emph{Figure~\ref{fig:main:exp1}}, we show the best results across all tree structures for each $L$ and algorithm. 
\begin{itemize}
    \item $L=2$: 
    \begin{itemize}
        \item SpecTr and RSD-S: $(K, L)\in\{(2, 2), (3, 2)\}$,
              where $K$ becomes the number of independent draft sequences for SpecTr and the beamwidth for RSD-S
        \item RSD-C: $\mathbf{b}\in\{[2, 2], [2, 1], [3, 1]\}$
              for a vector $\mathbf{b}$ of branching factors.
    \end{itemize}
    \item $L=3$
    \begin{itemize}
        \item SpecTr and RSD-S: $(K, L)\in\{(3, 3), (4, 3)\}$,
              where $K$ becomes the number of independent draft sequences for SpecTr and the beamwidth for RSD-S
        \item RSD-C: $\mathbf{b}\in\{[2, 2, 2], [3, 1, 1], [4, 1, 1]\}$
              for a vector $\mathbf{b}$ of branching factors.
    \end{itemize}
    \item $L=4$
    \begin{itemize}
        \item SpecTr and RSD-S: $(K, L)\in\{(5, 4), (7, 4)\}$,
              where $K$ becomes the number of independent draft sequences for SpecTr and the beamwidth for RSD-S
        \item RSD-C: $\mathbf{b}\in\{[2, 2, 2, 2], [5, 1, 1, 1], [7, 1, 1, 1]\}$
              for a vector $\mathbf{b}$ of branching factors.
    \end{itemize}
    \item $L=5$
    \begin{itemize}
        \item SpecTr and RSD-S: $(K, L)\in\{(6, 5), (12, 5)\}$,
              where $K$ becomes the number of independent draft sequences for SpecTr and the beamwidth for RSD-S
        \item RSD-C: $\mathbf{b}\in\{[2, 2, 2, 2, 2], [6, 1, 1, 1, 1], [12, 1, 1, 1, 1]\}$
              for a vector $\mathbf{b}$ of branching factors.
    \end{itemize}
\end{itemize}

\subsubsection{Experiment for vairous target computational budget}
\label{sec:appendix:tree_structure:exp2}
The following trees are used for target computational budgets $B$, i.e., the number of tokens to process at the target model, where $B$ becomes the draft length of SD. 
In \emph{Figure~\ref{fig:main:exp2}}, we show the best results across all tree structures for each $B$ and algorithm. 
\begin{itemize}
    \item
    $B=6$
    \begin{itemize}
        \item SpecTr and RSD-S: $(K, L)\in\{(2, 3), (3, 2)\}$,
              where $K$ becomes the number of independent draft sequences for SpecTr and the beamwidth for RSD-S
        \item RSD-C: $\mathbf{b}\in\{[2, 1, 1], [2, 2], [3, 1]\}$
              for a vector $\mathbf{b}$ of branching factors.
    \end{itemize}
    
    \item
    $B=10$
    \begin{itemize}
        \item SpecTr and RSD-S: $(K, L)\in\{(2, 5), (5, 2)\}$,
              where $K$ becomes the number of independent draft sequences for SpecTr and the beamwidth for RSD-S
        \item RSD-C: $\mathbf{b}\in\{[2, 1, 1, 1, 1], [2, 2, 1], [5, 1]\}$
              for a vector $\mathbf{b}$ of branching factors.
    \end{itemize}
    
    \item
    $B=14$
    \begin{itemize}
        \item SpecTr and RSD-S: $(K, L)\in\{(2, 7), (7, 2)\}$,
              where $K$ becomes the number of independent draft sequences for SpecTr and the beamwidth for RSD-S
        \item RSD-C: $\mathbf{b}\in\{[2, 1, 1, 1, 1, 1, 1], [2, 2, 2], [7, 1]\}$
              for a vector $\mathbf{b}$ of branching factors.
    \end{itemize}
    
    \item
    $B=21$
    \begin{itemize}
        \item SpecTr and RSD-S: $(K, L)\in\{(3, 7), (7, 3)\}$,
              where $K$ becomes the number of independent draft sequences for SpecTr and the beamwidth for RSD-S
        \item RSD-C: $\mathbf{b}\in\{[3, 1, 1, 1, 1, 1, 1], [3, 2, 2], [7, 1, 1]\}$
              for a vector $\mathbf{b}$ of branching factors.
    \end{itemize}
    
    \item
    $B=30$
    \begin{itemize}
        \item SpecTr and RSD-S: $(K, L)\in\{(5, 6), (6, 5)\}$,
              where $K$ becomes the number of independent draft sequences for SpecTr and the beamwidth for RSD-S
        \item RSD-C: $\mathbf{b}\in\{[2, 2, 2, 2], [5, 1, 1, 1, 1, 1], [6, 1, 1, 1, 1]\}$
              for a vector $\mathbf{b}$ of branching factors.
    \end{itemize}
    
\end{itemize}

\clearpage
\subsection{Experiment results with plots}
\label{sec:appendix:C:experiments}
\subsubsection{Block efficiency, MBSU, token rate and accuracy for various lengths of draft sequence}
\begin{figure*}[hbt!]
\begin{center}
\def\x{\columnwidth/2+0.2}
\begin{tikzpicture}
    \node at (0,0) {\includegraphics[width=0.95\textwidth]{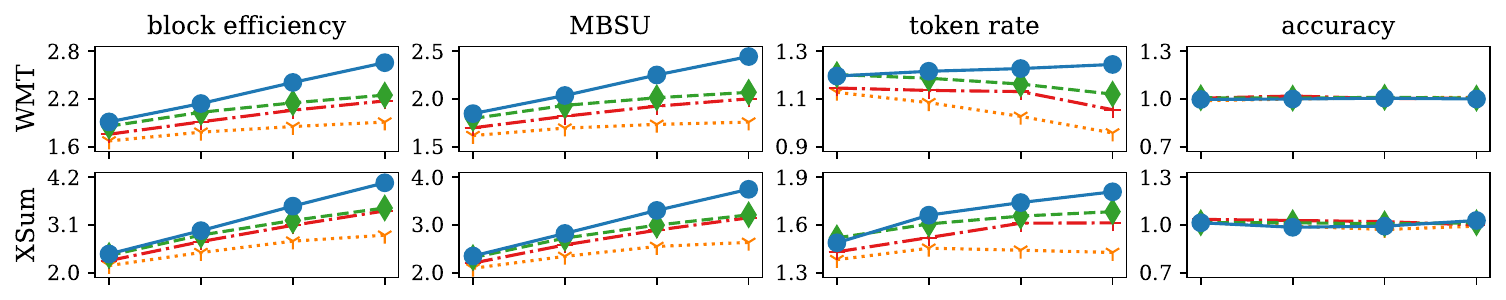}};
    \node [rotate=90] at (-\x, 0.0) {\small Llama 2-7B};
    
    \node at (0,-3.2) {\includegraphics[width=0.95\textwidth]{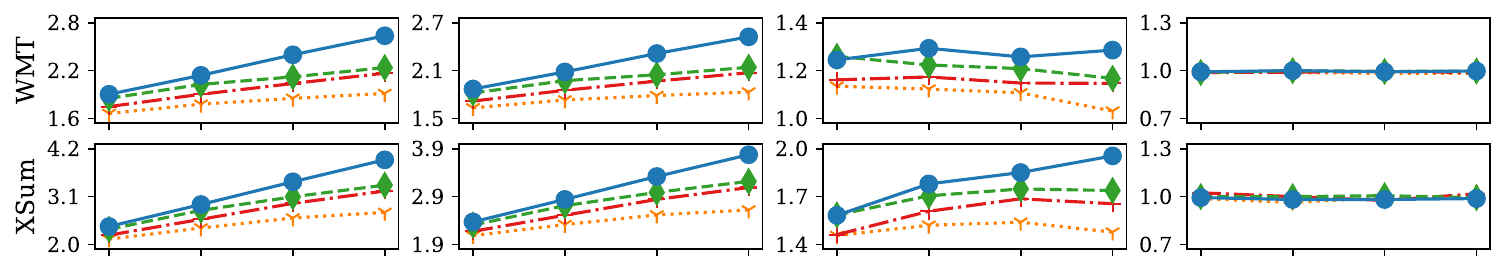}};
    \node [rotate=90] at (-\x, -3.2) {\small Llama 2-13B};
    
    \node at (0,-6.9) {\includegraphics[width=0.95\textwidth]{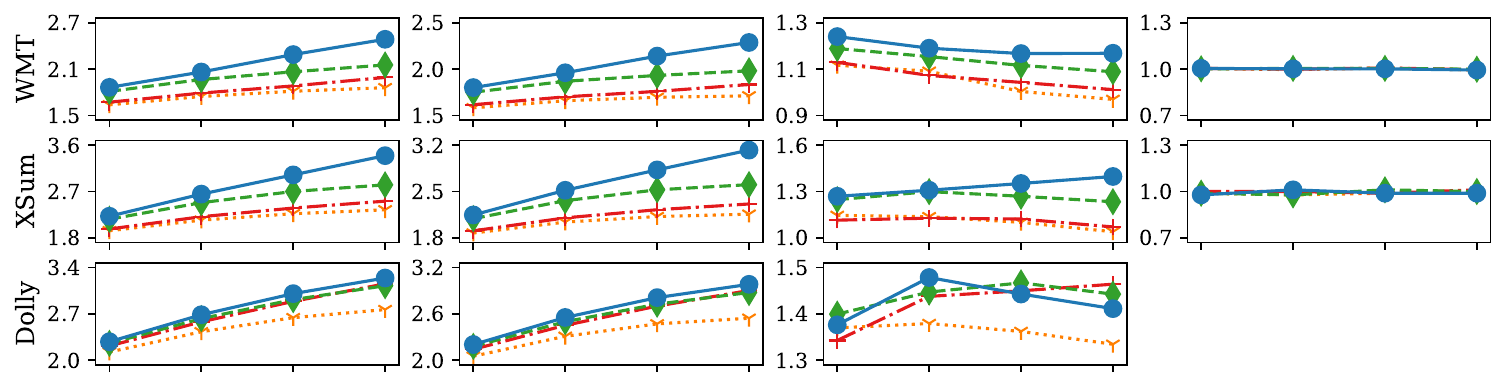}};
    \node [rotate=90] at (-\x, -6.9) {\small Llama 2-Chat-7B};
    
    \node at (0,-11.9) {\includegraphics[width=0.95\textwidth]{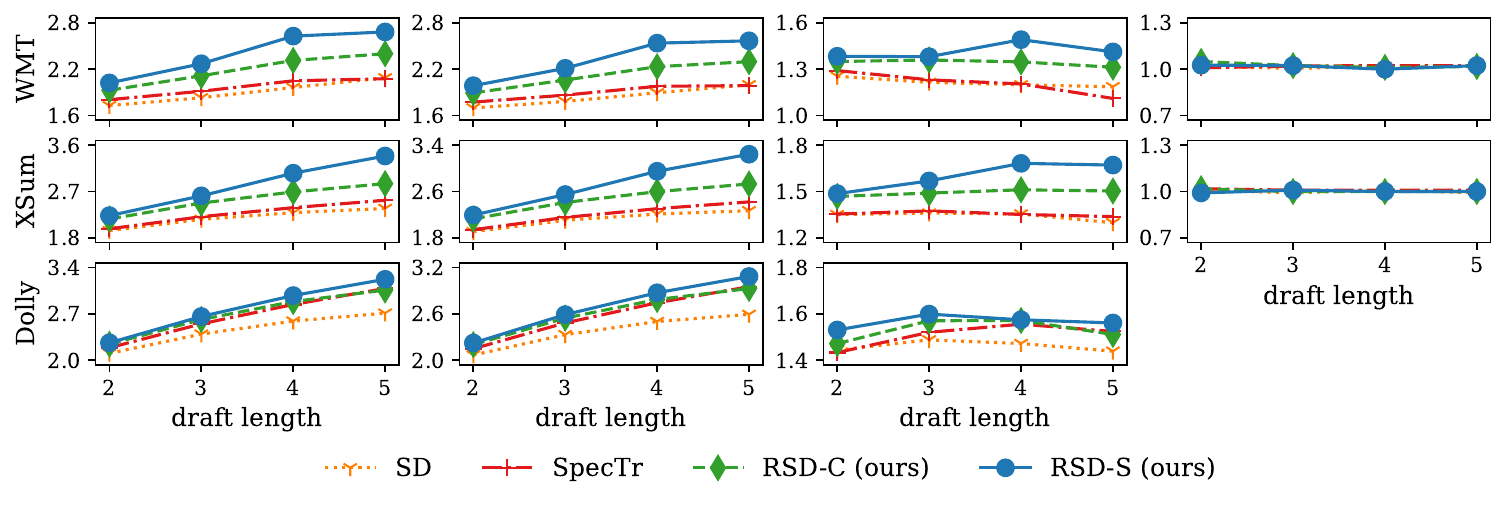}};
    \node [rotate=90] at (-\x, -11.9+0.7) {\small Llama 2-Chat-13B};
\end{tikzpicture}
\vspace{-0.2in}
\caption{Block efficiency, MBSU, token rate and accuracy for varying lengths of draft sequence are given for multiple target models: Llama 2-7B, Llama 2-13B, Llama 2-Chat-7B, Llama 2-Chat-13B. Chat models use the same draft model, while the other models use the same draft model different from the one for chat models. All results are normalized w.r.t. the values of AR decoding.}
\label{}
\end{center}
\end{figure*}

\begin{figure*}[h]
\begin{center}
\def\x{\columnwidth/2+0.2}
\begin{tikzpicture}
    \node at (0,0) {\includegraphics[width=0.95\textwidth]{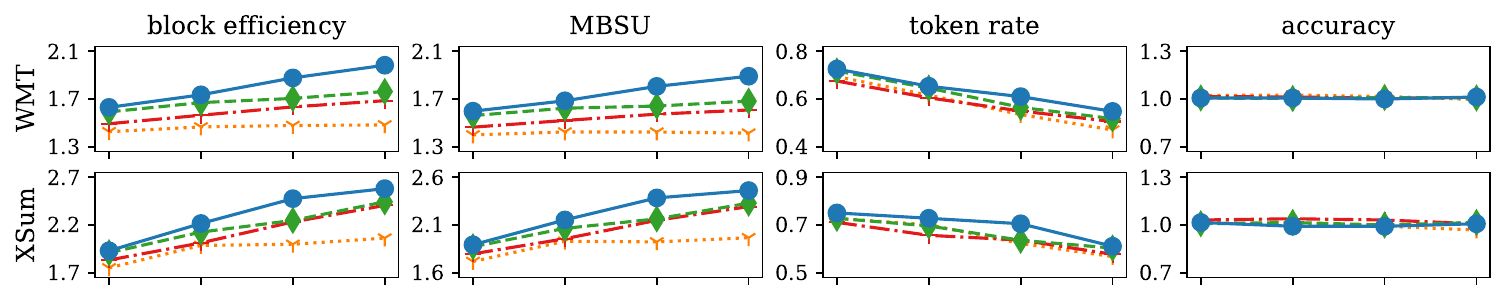}};
    \node [rotate=90] at (-\x, -0.2) {\small OPT-125M-13B};
    
    \node at (0,-3.1) {\includegraphics[width=0.95\textwidth]{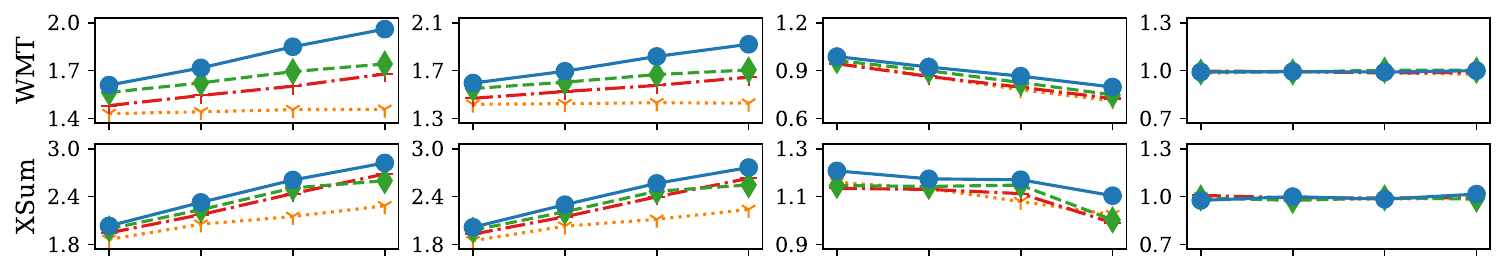}};
    \node [rotate=90] at (-\x, -3.1) {\small OPT-125M-30B};
    
    \node at (0,-6.2) {\includegraphics[width=0.95\textwidth]{main_figure/plot_OPT-125M-30B_v2_main.pdf}};
    \node [rotate=90] at (-\x, -6.2) {\small OPT-125M-66B};
    
    \node at (0,-9.3) {\includegraphics[width=0.95\textwidth]{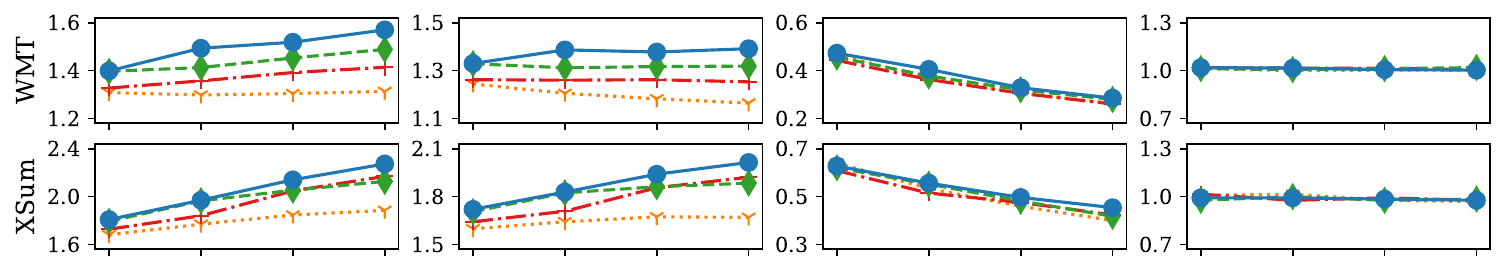}};
    \node [rotate=90] at (-\x, -9.3) {\small OPT-350M-13B};
    
    \node at (0,-12.4) {\includegraphics[width=0.95\textwidth]{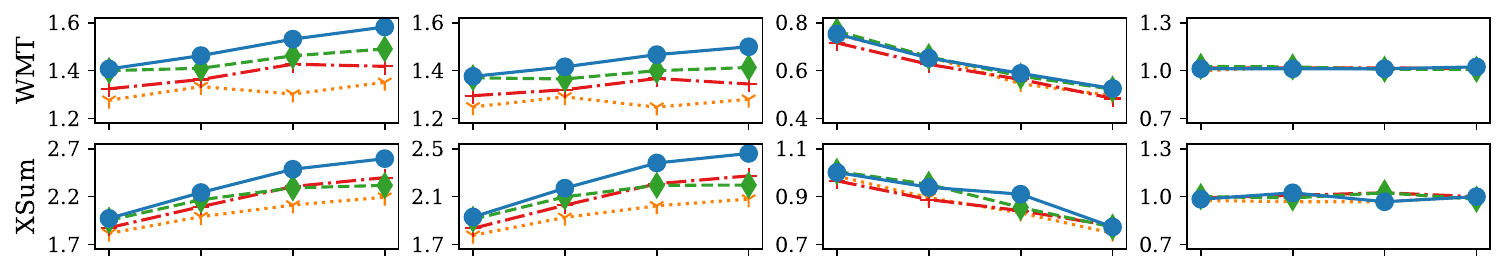}};
    \node [rotate=90] at (-\x, -12.4) {\small OPT-350M-30B};
    
    \node at (0,-16.1) {\includegraphics[width=0.95\textwidth]{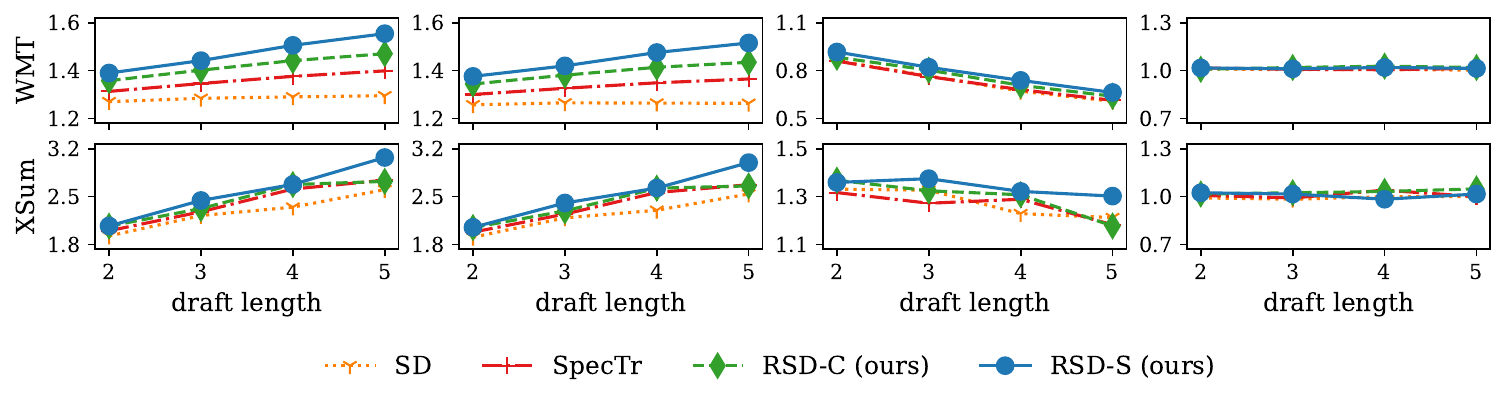}};
    \node [rotate=90] at (-\x, -15.4) {\small OPT-350M-66B};
\end{tikzpicture}
\vspace{-0.2in}
\caption{Block efficiency, MBSU, token rate and accuracy for varying lengths of draft sequence are given for multiple pairs of draft and target models: the size of draft model is in $\{\text{125M, 350M}\}$, and the size of target model is in $\{\text{13B, 30B, 66B}\}$. All results are normalized w.r.t. the values of AR decoding.}
\label{}
\end{center}
\end{figure*}

\clearpage
\subsubsection{Block efficiency, MBSU, token rate and accuracy for various target computational budget}
\begin{figure*}[hbt!]
\begin{center}
\def\x{\columnwidth/2+0.2}
\begin{tikzpicture}
    \node at (0,0) {\includegraphics[width=0.95\textwidth]{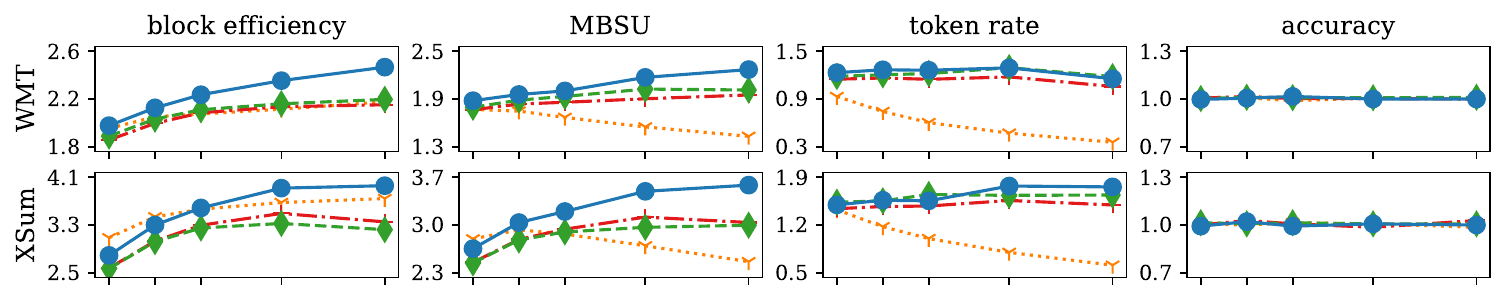}};
    \node [rotate=90] at (-\x, 0.0) {\small Llama 2-7B};
    
    \node at (0,-3.2) {\includegraphics[width=0.95\textwidth]{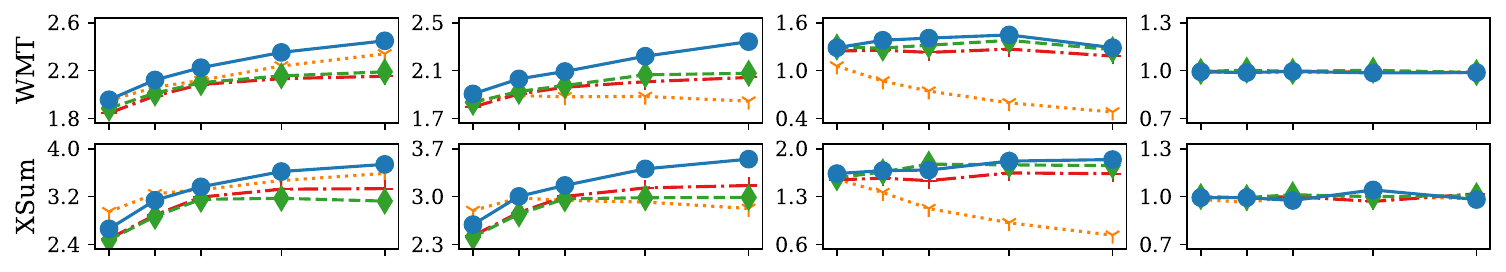}};
    \node [rotate=90] at (-\x, -3.2) {\small Llama 2-13B};
    
    \node at (0,-6.9) {\includegraphics[width=0.95\textwidth]{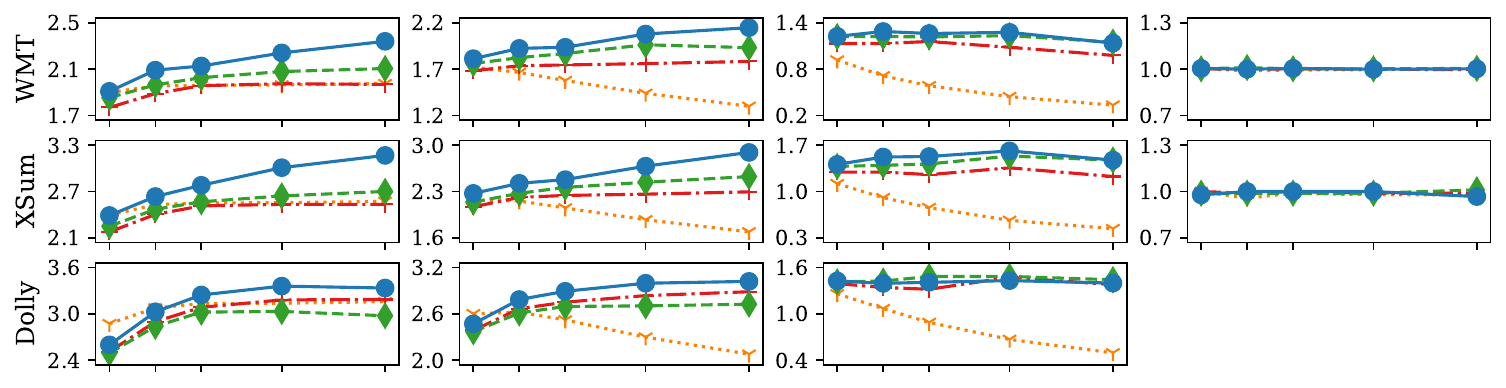}};
    \node [rotate=90] at (-\x, -6.9) {\small Llama 2-Chat-7B};
    
    \node at (0,-11.9) {\includegraphics[width=0.95\textwidth]{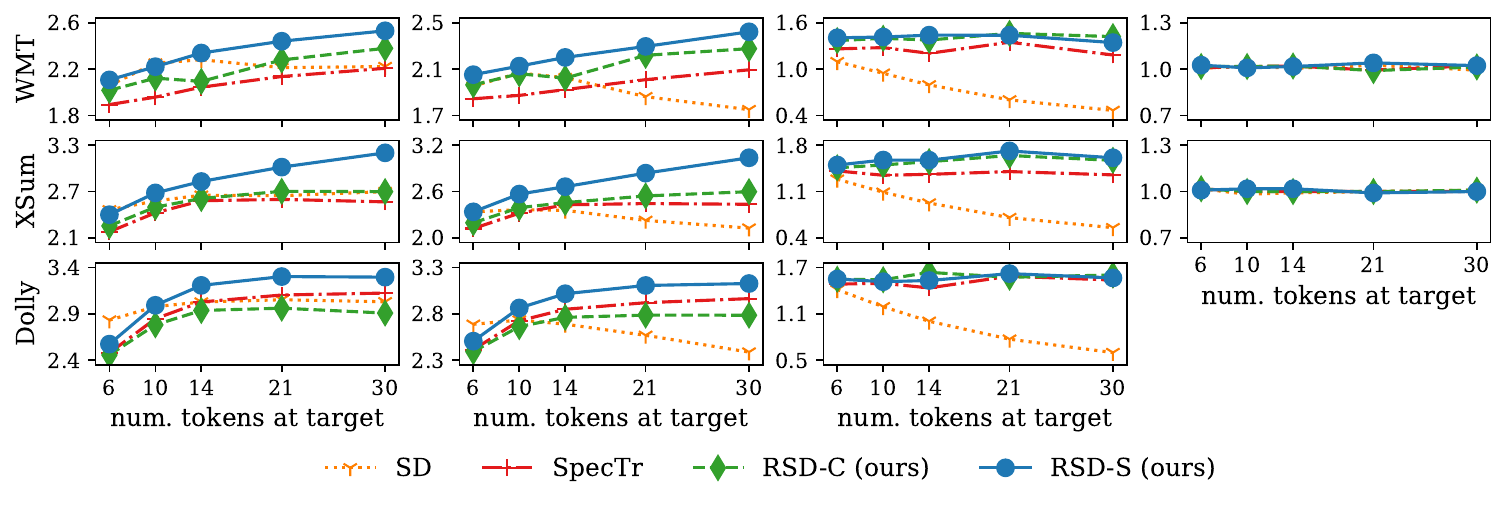}};
    \node [rotate=90] at (-\x, -11.9+0.7) {\small Llama 2-Chat-13B};
\end{tikzpicture}
\vspace{-0.2in}
\caption{Block efficiency, MBSU, token rate and accuracy for varying numbers of tokens processed at the target model are given for multiple target models: Llama 2-7B, Llama 2-13B, Llama 2-Chat-7B, Llama 2-Chat-13B. Chat models use the same draft model, while the other models use the same draft model different from the one for chat models.  All results are normalized w.r.t. the values of AR decoding.}
\label{}
\end{center}
\end{figure*}

\begin{figure*}[hbt!]
\begin{center}
\def\x{\columnwidth/2+0.2}
\begin{tikzpicture}
    \node at (0,0) {\includegraphics[width=0.95\textwidth]{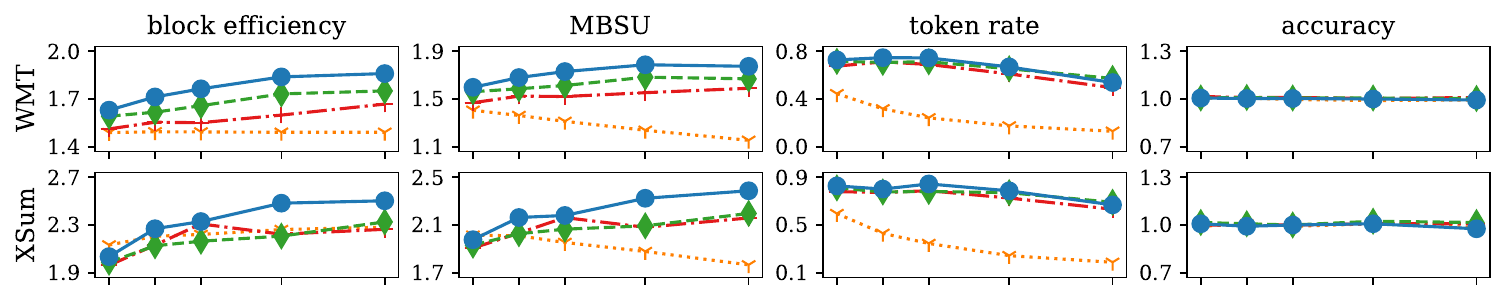}};
    \node [rotate=90] at (-\x, -0.2) {\small OPT-125M-13B};
    
    \node at (0,-3.2) {\includegraphics[width=0.95\textwidth]{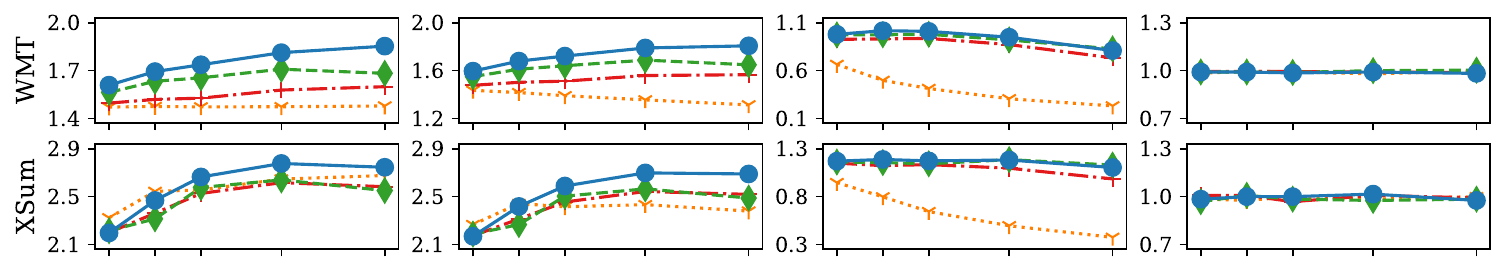}};
    \node [rotate=90] at (-\x, -3.2) {\small OPT-125M-30B};
    
    \node at (0,-6.4) {\includegraphics[width=0.95\textwidth]{main_figure/plot_OPT-125M-30B_v1_main.pdf}};
    \node [rotate=90] at (-\x, -6.4) {\small OPT-125M-66B};
    
    \node at (0,-9.6) {\includegraphics[width=0.95\textwidth]{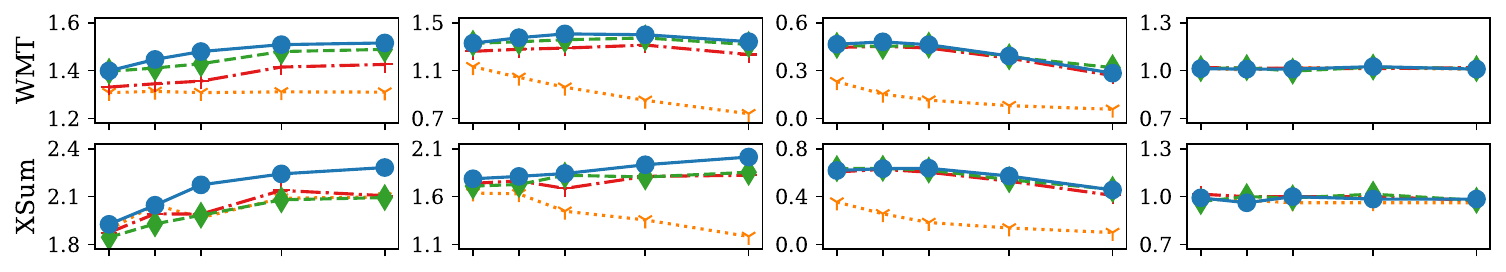}};
    \node [rotate=90] at (-\x, -9.6) {\small OPT-350M-13B};
    
    \node at (0,-12.8) {\includegraphics[width=0.95\textwidth]{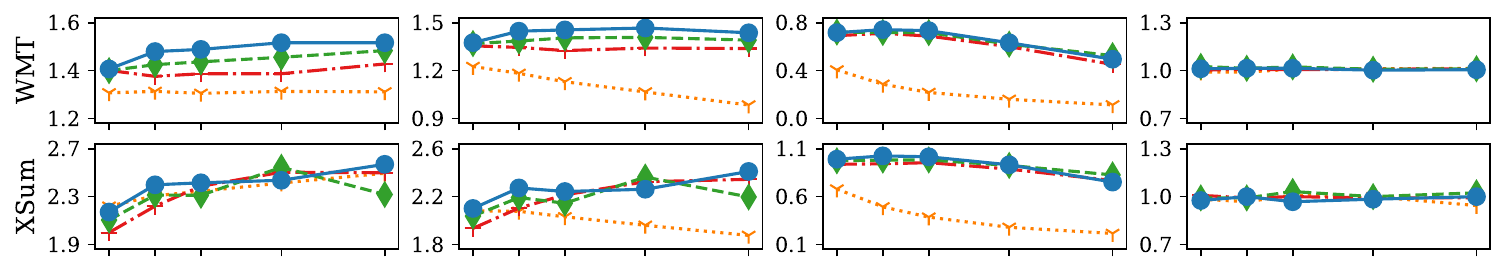}};
    \node [rotate=90] at (-\x, -12.8) {\small OPT-350M-30B};
    
    \node at (0,-16.7) {\includegraphics[width=0.95\textwidth]{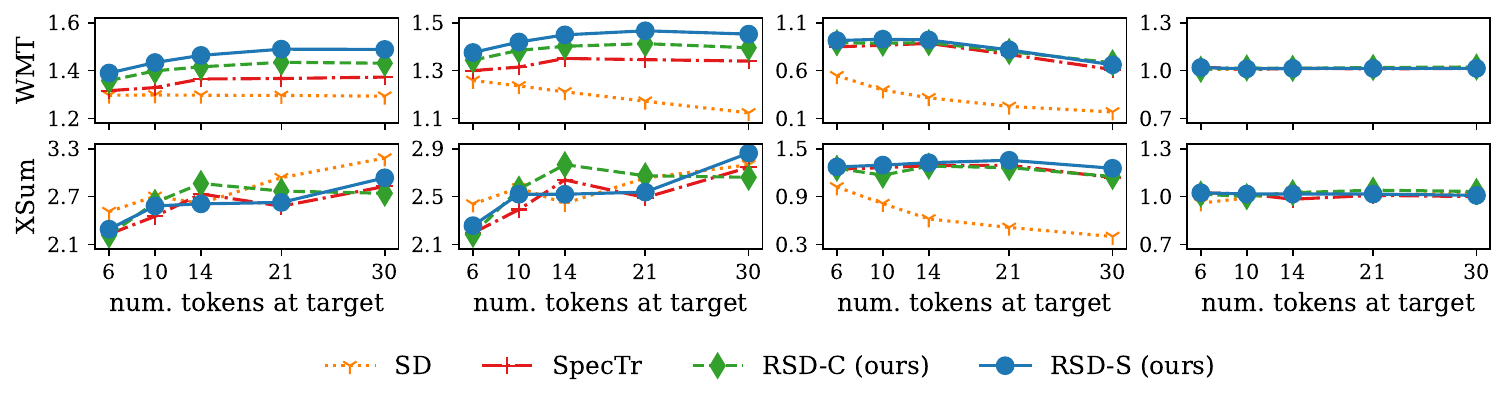}};
    \node [rotate=90] at (-\x, -16.0) {\small OPT-350M-66B};
\end{tikzpicture}
\vspace{-0.2in}
\caption{Block efficiency, MBSU, token rate and accuracy for varying numbers of tokens processed at the target model are given for multiple pairs of draft and target models: the size of draft model is in $\{\text{125M, 350M}\}$, and the size of target model is in $\{\text{13B, 30B, 66B}\}$. All results are normalized w.r.t. the values of AR decoding.}
\label{}
\end{center}
\end{figure*}

\clearpage
\subsection{Experient results with tables}
For readers curious about raw numbers, we remain all the numbers used for plots as tables in this section. 

\subsubsection{Block efficiency, MBSU, token rate and accuracy for varying lengths of draft sequence}
\begin{itemize}

    \item Llama 2-7B (with 115M drafter)
    \begin{itemize}
        \item 
        XSum (\emph{Table}~\ref{table_7b-base_xsum_v2}),
        WMT (\emph{Table}~\ref{table_7b-base_wmt_v2})
    \end{itemize}
    
    \item Llama 2-13B (with 115M drafter)
    \begin{itemize}
        \item
        XSum (\emph{Table}~\ref{table_13b-base_xsum_v2}),
        WMT (\emph{Table}~\ref{table_13b-base_wmt_v2})
    \end{itemize}
    
    \item Llama 2-70B (with 115M drafter)
    \begin{itemize}
        \item
        XSum (\emph{Table}~\ref{table_70b-base_xsum_v2}),
        WMT (\emph{Table}~\ref{table_70b-base_wmt_v2})
    \end{itemize}

    \item Llama 2-Chat-7B (with 115M drafter)
    \begin{itemize}
        \item
        XSum (\emph{Table}~\ref{table_7b-chat_xsum_v2}),
        WMT (\emph{Table}~\ref{table_7b-chat_wmt_v2}), 
        Dolly (\emph{Table}~\ref{table_7b-chat_dolly_v2})
    \end{itemize}
    
    \item Llama 2-Chat-13B (with 115M drafter)
    \begin{itemize}
        \item 
        XSum (\emph{Table}~\ref{table_13b-chat_xsum_v2}),
        WMT (\emph{Table}~\ref{table_13b-chat_wmt_v2}), 
        Dolly (\emph{Table}~\ref{table_13b-chat_dolly_v2})
    \end{itemize}
    
    \item Llama 2-Chat-70B (with 115M drafter)
    \begin{itemize}
        \item 
        XSum (\emph{Table}~\ref{table_70b-chat_xsum_v2}),
        WMT (\emph{Table}~\ref{table_70b-chat_wmt_v2}), 
        Dolly (\emph{Table}~\ref{table_70b-chat_dolly_v2})
    \end{itemize}

    \item OPT-13B (with OPT-125M drafter)
    \begin{itemize}
        \item 
        XSum (\emph{Table}~\ref{table_125M_13B-base_xsum_v2}),
        WMT (\emph{Table}~\ref{table_125M_13B-base_wmt_v2})
    \end{itemize}  
    
    \item OPT-30B (with OPT-125M drafter)
    \begin{itemize}
        \item 
        XSum (\emph{Table}~\ref{table_125M_30B-base_xsum_v2}),
        WMT (\emph{Table}~\ref{table_125M_30B-base_wmt_v2})
    \end{itemize}  
    
    \item OPT-66B (with OPT-125M drafter)
    \begin{itemize}
        \item 
        XSum (\emph{Table}~\ref{table_125M_66B-base_xsum_v2}),
        WMT (\emph{Table}~\ref{table_125M_66B-base_wmt_v2})
    \end{itemize}

    \item OPT-13B (with OPT-350M drafter)
    \begin{itemize}
        \item 
        XSum (\emph{Table}~\ref{table_350M_13B-base_xsum_v2}),
        WMT (\emph{Table}~\ref{table_350M_13B-base_wmt_v2})
    \end{itemize}  
    
    \item OPT-30B (with OPT-350M drafter)
    \begin{itemize}
        \item 
        XSum (\emph{Table}~\ref{table_350M_30B-base_xsum_v2}),
        WMT (\emph{Table}~\ref{table_350M_30B-base_wmt_v2})
    \end{itemize}  
    
    \item OPT-66B (with OPT-350M drafter)
    \begin{itemize}
        \item 
        XSum (\emph{Table}~\ref{table_350M_66B-base_xsum_v2}),
        WMT (\emph{Table}~\ref{table_350M_66B-base_wmt_v2})
    \end{itemize}  
    
\end{itemize}

\clearpage

\subsubsection{Block efficiency, MBSU, token rate and accuracy for varying numbers of tokens processed at the target model}
\label{sec:appendix:exp1:table}
\begin{itemize}

    \item Llama 2-7B (with 115M drafter)
    \begin{itemize}
        \item 
        XSum (\emph{Table}~\ref{table_7b-base_xsum_v1}),
        WMT (\emph{Table}~\ref{table_7b-base_wmt_v1})
    \end{itemize}
    
    \item Llama 2-13B (with 115M drafter)
    \begin{itemize}
        \item
        XSum (\emph{Table}~\ref{table_13b-base_xsum_v1}),
        WMT (\emph{Table}~\ref{table_13b-base_wmt_v1})
    \end{itemize}
    
    \item Llama 2-70B (with 115M drafter)
    \begin{itemize}
        \item
        XSum (\emph{Table}~\ref{table_70b-base_xsum_v1}),
        WMT (\emph{Table}~\ref{table_70b-base_wmt_v1})
    \end{itemize}

    \item Llama 2-Chat-7B (with 115M drafter)
    \begin{itemize}
        \item
        XSum (\emph{Table}~\ref{table_7b-chat_xsum_v1}),
        WMT (\emph{Table}~\ref{table_7b-chat_wmt_v1}), 
        Dolly (\emph{Table}~\ref{table_7b-chat_dolly_v1})
    \end{itemize}
    
    \item Llama 2-Chat-13B (with 115M drafter)
    \begin{itemize}
        \item 
        XSum (\emph{Table}~\ref{table_13b-chat_xsum_v1}),
        WMT (\emph{Table}~\ref{table_13b-chat_wmt_v1}), 
        Dolly (\emph{Table}~\ref{table_13b-chat_dolly_v1})
    \end{itemize}
    
    \item Llama 2-Chat-70B (with 115M drafter)
    \begin{itemize}
        \item 
        XSum (\emph{Table}~\ref{table_70b-chat_xsum_v1}),
        WMT (\emph{Table}~\ref{table_70b-chat_wmt_v1}), 
        Dolly (\emph{Table}~\ref{table_70b-chat_dolly_v1})
    \end{itemize}

    \item OPT-13B (with OPT-125M drafter)
    \begin{itemize}
        \item 
        XSum (\emph{Table}~\ref{table_125M_13B-base_xsum_v1}),
        WMT (\emph{Table}~\ref{table_125M_13B-base_wmt_v1})
    \end{itemize}  
    
    \item OPT-30B (with OPT-125M drafter)
    \begin{itemize}
        \item 
        XSum (\emph{Table}~\ref{table_125M_30B-base_xsum_v1}),
        WMT (\emph{Table}~\ref{table_125M_30B-base_wmt_v1})
    \end{itemize}  
    
    \item OPT-66B (with OPT-125M drafter)
    \begin{itemize}
        \item 
        XSum (\emph{Table}~\ref{table_125M_66B-base_xsum_v1}),
        WMT (\emph{Table}~\ref{table_125M_66B-base_wmt_v1})
    \end{itemize}

    \item OPT-13B (with OPT-350M drafter)
    \begin{itemize}
        \item 
        XSum (\emph{Table}~\ref{table_350M_13B-base_xsum_v1}),
        WMT (\emph{Table}~\ref{table_350M_13B-base_wmt_v1})
    \end{itemize}  
    
    \item OPT-30B (with OPT-350M drafter)
    \begin{itemize}
        \item 
        XSum (\emph{Table}~\ref{table_350M_30B-base_xsum_v1}),
        WMT (\emph{Table}~\ref{table_350M_30B-base_wmt_v1})
    \end{itemize}  
    
    \item OPT-66B (with OPT-350M drafter)
    \begin{itemize}
        \item 
        XSum (\emph{Table}~\ref{table_350M_66B-base_xsum_v1}),
        WMT (\emph{Table}~\ref{table_350M_66B-base_wmt_v1})
    \end{itemize}  
    
\end{itemize}

\clearpage

\input{table/table_7b-base_xsum_v2}
\input{table/table_7b-base_wmt_v2}
\input{table/table_13b-base_xsum_v2}
\input{table/table_13b-base_wmt_v2}
\input{table/table_70b-base_xsum_v2}
\input{table/table_70b-base_wmt_v2}
\input{table/table_7b-chat_xsum_v2}
\input{table/table_7b-chat_wmt_v2}
\input{table/table_7b-chat_dolly_v2}
\input{table/table_13b-chat_xsum_v2}
\input{table/table_13b-chat_wmt_v2}
\input{table/table_13b-chat_dolly_v2}
\input{table/table_70b-chat_xsum_v2}
\input{table/table_70b-chat_wmt_v2}
\input{table/table_70b-chat_dolly_v2}
\input{table/table_125M_13B-base_xsum_v2}
\input{table/table_125M_13B-base_wmt_v2}
\input{table/table_125M_30B-base_xsum_v2}
\input{table/table_125M_30B-base_wmt_v2}
\input{table/table_125M_66B-base_xsum_v2}
\input{table/table_125M_66B-base_wmt_v2}
\input{table/table_350M_13B-base_xsum_v2}
\input{table/table_350M_13B-base_wmt_v2}
\input{table/table_350M_30B-base_xsum_v2}
\input{table/table_350M_30B-base_wmt_v2}
\input{table/table_350M_66B-base_xsum_v2}
\input{table/table_350M_66B-base_wmt_v2}

\clearpage

\input{table/table_7b-base_xsum_v1}
\input{table/table_7b-base_wmt_v1}
\input{table/table_13b-base_xsum_v1}
\input{table/table_13b-base_wmt_v1}
\input{table/table_70b-base_xsum_v1}
\input{table/table_70b-base_wmt_v1}
\input{table/table_7b-chat_xsum_v1}
\input{table/table_7b-chat_wmt_v1}
\input{table/table_7b-chat_dolly_v1}
\input{table/table_13b-chat_xsum_v1}
\input{table/table_13b-chat_wmt_v1}
\input{table/table_13b-chat_dolly_v1}
\input{table/table_70b-chat_xsum_v1}
\input{table/table_70b-chat_wmt_v1}
\input{table/table_70b-chat_dolly_v1}
\input{table/table_125M_13B-base_xsum_v1}
\input{table/table_125M_13B-base_wmt_v1}
\input{table/table_125M_30B-base_xsum_v1}
\input{table/table_125M_30B-base_wmt_v1}
\input{table/table_125M_66B-base_xsum_v1}
\input{table/table_125M_66B-base_wmt_v1}
\input{table/table_350M_13B-base_xsum_v1}
\input{table/table_350M_13B-base_wmt_v1}
\input{table/table_350M_30B-base_xsum_v1}
\input{table/table_350M_30B-base_wmt_v1}
\input{table/table_350M_66B-base_xsum_v1}
\input{table/table_350M_66B-base_wmt_v1}

\end{document}